\def\eqref#1{equation~\ref{#1}}
\def\1{\bm{1}}
\DeclareMathAlphabet{\mathsfit}{\encodingdefault}{\sfdefault}{m}{sl}
\SetMathAlphabet{\mathsfit}{bold}{\encodingdefault}{\sfdefault}{bx}{n}
\newtheorem{theorem}{Theorem}
\theoremstyle{definition}
\theoremstyle{remark}
\title{DYNAMIC CONTRASTIVE SKILL LEARNING WITH STATE-TRANSITION BASED SKILL CLUSTERING AND DYNAMIC LENGTH ADJUSTMENT}
\author{Jinwoo Choi, Seung-Woo Seo\thanks{Corresponding author} \\
Department of Electrical and Computer Engineering\\
Seoul National University\\
Seoul, South Korea\\
% Department of Computer Science\\
% Cranberry-Lemon University\\
% Pittsburgh, PA 15213, USA \\
\texttt{\{wlsdn9350,sseo\}@snu.ac.kr} \\
% \And
% Ji Q. Ren \& Yevgeny LeNet \\
% Department of Computational Neuroscience \\
% University of the Witwatersrand \\
% Joburg, South Africa \\
% \texttt{\{robot,net\}@wits.ac.za} \\
% \AND
% Coauthor \\
% Affiliation \\
% Address \\
% \texttt{email}
}
\begin{document}

\maketitle

\begin{abstract}
% The abstract paragraph should be indented 1/2~inch (3~picas) on both left and
% right-hand margins. Use 10~point type, with a vertical spacing of 11~points.
% The word \textsc{Abstract} must be centered, in small caps, and in point size 12. Two
% line spaces precede the abstract. The abstract must be limited to one
% paragraph.

Reinforcement learning (RL) has made significant progress in various domains, but scaling it to long-horizon tasks with complex decision-making remains challenging. Skill learning attempts to address this by abstracting actions into higher-level behaviors. However, current approaches often fail to recognize semantically similar behaviors as the same skill and use fixed skill lengths, limiting flexibility and generalization. To address this, we propose Dynamic Contrastive Skill Learning (DCSL), a novel framework that redefines skill representation and learning. DCSL introduces three key ideas: state-transition based skill representation, skill similarity function learning, and dynamic skill length adjustment. By focusing on state transitions and leveraging contrastive learning, DCSL effectively captures the semantic context of behaviors and adapts skill lengths to match the appropriate temporal extent of behaviors. Our approach enables more flexible and adaptive skill extraction, particularly in complex or noisy datasets, and demonstrates competitive performance compared to existing methods in task completion and efficiency.

\end{abstract}

\section{Introduction}

Reinforcement learning (RL) has advanced significantly in various domains (\cite{vinyals2019grandmaster}, \cite{silver2016mastering}), yet scaling it to long-horizon tasks with complex decision-making remains challenging (\cite{dulac2019challenges}). Skill learning from datasets ((\cite{lynch2020learning}, \cite{sharma2019dynamics}, \cite{freed2023learning}, \cite{shi2022skill}, \cite{hao2024skill}) addresses these challenges by abstracting action sequences into higher-level behaviors, simplifying decision-making, and improving efficiency in long-horizon tasks.

Despite its potential, prior supervised skill-learning methods (\cite{freed2023learning}, \cite{shi2022skill}) face significant limitations. First, action sequence encoding methods often struggle to identify semantically similar behaviors as the same skill when there are variations in the action sequences. As illustrated in Fig. \ref{fig:1_robot}, the actions required for `grabbing the object' differ based on the object's position, leading to the learning of these behaviors as distinct skills. This limitation requires a large skill dimension space to accommodate various skills (\cite{pertsch2021accelerating}), which in turn creates inefficiency when searching for the optimal combination of skills for new tasks. Since skills are stored without considering their semantic context, if a small skill dimension space is given, a dimensional collapse (\cite{jing2021understanding}) may occur where different behaviors are mapped to similar representations in the skill space.

Second, fixed skill lengths fail to reflect the varying durations of real-world behaviors. For instance, in pick-drop and pick-move data, if the length of the `pick' behavior is shorter than the fixed skill length, the model fails to learn pick, drop, and move as individual skills, as humans would. While fixed skill lengths have limitations, prior research using datasets of task-agnostic but meaningful behaviors has shown that learned skills can still retain some semantic content. However, this approach struggles with more complex or noisy data, where irrelevant or meaningless actions are included. In such scenarios, the ability to adapt skill lengths becomes crucial.

\begin{figure}[t]
    \centering
    \includegraphics[width=0.9\textwidth]{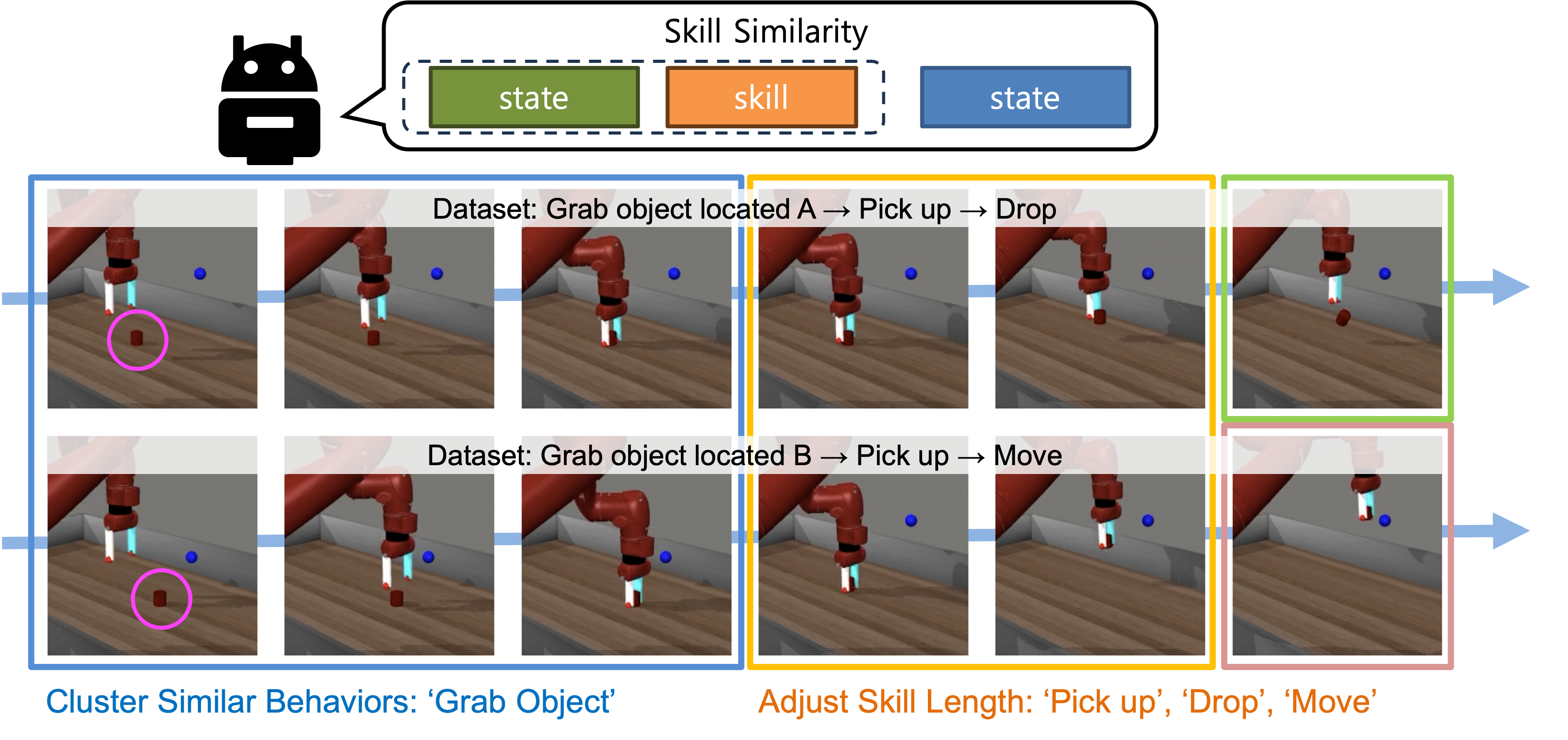}
    \caption{Illustration of DCSL's key ideas. Previous methods recognized `grab object' as different skills based on object position, while DCSL clusters these into a unified skill using learned skill similarity. DCSL overcomes the limitation of fixed-length skills by dynamically adjusting skill lengths, enabling the recognition of shorter actions like `Pick up' as independent skills.}
    \label{fig:1_robot}
\end{figure}

To address these challenges, we propose Dynamic Contrastive Skill Learning (DCSL), a novel framework that offers a new perspective on skill representation. DCSL introduces three key ideas: \textit{state-transition based skill representation}, \textit{skill similarity function learning}, and \textit{dynamic skill length adjustment}. Unlike traditional methods that define skills based on action sequences, DCSL focuses on state transitions, enabling a more effective capture of the semantic context of behaviors. Through contrastive learning (\cite{oord2018representation}), DCSL learns a skill similarity function that clusters semantically similar behaviors into the same skill, promoting more meaningful skill representations. DCSL leverages this function to dynamically adjust skill lengths based on observed state transitions, allowing for more flexible and adaptive skill extraction, particularly in noisy datasets. By combining these approaches, DCSL overcomes the limitations of existing methods, enabling effective skill learning that adapts to diverse environments and task structures.

In summary, our framework redefines skill representation and learning through three key contributions:
\begin{itemize}
    \item We propose a state-transition based skill representation that captures the semantic context of behaviors more effectively than action sequence encoding. This approach enables the identification of similar skills even when the underlying action sequences differ.

    \item We introduce a skill similarity function learned through contrastive learning, which clusters semantically similar behaviors into the similar skill embeddings. This method effectively differentiates between states belonging to the same skill and those that do not.

    \item We develop a dynamic skill length adjustment method based on observed state transitions. Using the learned skill similarity function, our approach identifies high-similarity state transitions within the dataset and relabels skill lengths accordingly, allowing for more flexible and adaptive skill extraction in complex or noisy datasets.
\end{itemize}

\section{Related Work}
\textbf{Skill Learning from Dataset}\quad Skill learning from datasets has emerged as a promising approach to improve efficiency and generalization in long-horizon reinforcement learning tasks. Foundational works like \cite{pertsch2021accelerating} learned latent skill spaces from offline datasets, showing significant improvements in guiding exploration and task completion in novel environments. \cite{lynch2020learning} leveraged unlabeled play data to learn goal-conditioned policies in a latent plan space, while \cite{pertsch2022cross} explored cross-domain skill transfer. In multi-task learning, \cite{yoo2022skills} applied skill regularization and data augmentation to handle diverse offline datasets. Model-based approaches (\cite{shi2022skill}, \cite{freed2023learning}) integrated skill dynamics with planning to improve long-term task planning without additional interaction. Existing skill learning methods lack flexibility due to their use of fixed-length action sequences, and struggle to recognize semantically similar behaviors. To address these limitations, our framework leverages contrastive learning to define skills based on state transitions and introduces a technique for dynamically adjusting skill lengths, enabling more flexible and adaptive skill extraction.

\textbf{Contrastive Learning in Reinforcement Learning}\quad Contrastive learning has gained significant traction in RL for enhancing state representations and improving sample efficiency. These methods typically employ contrastive losses to prioritize task-relevant features while disregarding irrelevant details. In the context of general state representation learning, \cite{laskin2020curl} incorporated contrastive learning with off-policy RL algorithms, leveraging data augmentation and contrastive losses to extract meaningful representations from raw pixel inputs. For goal-conditioned RL, several works (\cite{eysenbach2020c}, \cite{eysenbach2022contrastive}, \cite{zheng2023stabilizing}) have reframed goal achievement as a density estimation problem, using classifiers to predict future state distributions without explicit reward functions. In the domain of unsupervised skill discovery, \cite{yang2023behavior} applied multi-view contrastive learning to foster greater similarity within the same skill while maintaining diversity across different skills. Our DCSL framework builds upon these ideas by applying contrastive learning specifically to skill extraction from datasets. Unlike previous methods, we uniquely integrate dynamic skill length adjustment using learned contrastive representations, addressing the limitations of fixed-length approaches and enhancing adaptability across diverse tasks.

\textbf{Dynamic Length Adjustment in Temporal Abstractions}\quad Dynamic length adjustment in temporal abstractions enhances RL by enabling more flexible and hierarchical decision-making across variable time scales. The Options framework (\cite{sutton1999between}) laid the groundwork for temporal abstraction in reinforcement learning. Building on this, \cite{bacon2017option} proposed the Option-Critic architecture, which learns options end-to-end with differentiable option policies and termination functions. \cite{shankar2020learning} introduced a method using temporal variational inference to discover skills from demonstrations in an unsupervised manner. Further contributions include the \cite{harutyunyan2019termination}, which learns option termination conditions, and the \cite{jiang2022learning}, which addresses the underspecification problem in skill learning by combining maximum likelihood objectives with a penalty on skill description length. While these approaches have significantly advanced temporal abstraction in RL, DCSL introduces a novel perspective. Instead of relying on learned termination functions, DCSL employs a skill similarity measure to dynamically adjust skill lengths. This approach allows DCSL to capture more meaningful and adaptable skill representations.

\section{Problem Formulation and Preliminaries}

\subsection{Problem Formulation}

In this work, we consider skill learning from unlabeled offline data, where the agent has access to a large, task-agnostic dataset $\mathcal{D} = \{ \tau_1, \tau_2, \dots, \tau_N \}$. Each trajectory $\tau_i = \{(s_t, a_t)\}_{t=1}^T$ consists of sequences of states $s_t \in \mathcal{S}$ and actions $a_t \in \mathcal{A}$, without any task-specific reward labels. We define a skill as a latent vector $z \in \mathcal{Z}$ that encodes a temporally extended behavior, capable of generating a sequence of actions given an initial state. Our aim is to learn skills that encapsulate diverse behaviors, which can later be adapted to solve new tasks efficiently. Each task is modeled as a Markov decision process (MDP) defined by the tuple $\{\mathcal{S}, \mathcal{A}, \mathcal{P}, r, \rho, \gamma \}$, representing states, actions, transition probabilities, rewards, initial state distribution, and discount factor.

\subsection{Skill-based RL}

Skill-based RL aims to improve sample efficiency and generalization in long-horizon tasks by abstracting low-level actions into higher-level skills. SPiRL (\cite{pertsch2021accelerating}) is a framework that leverages prior data to learn a skill embedding space and a skill prior. SPiRL consists of three main components: a skill encoder $q(z|\mathbf{a})$, a skill decoder $\pi(a|s,z)$, and a skill prior $p_{a}(z|s)$. Here, bolded $\mathbf{a}$ denotes a sequence of actions $\{ a_0,\cdots, a_{H-1} \}$, where $H$ is a skill length. The skill encoder and skill decoder are trained through a Variational Autoencoder (VAE), while the skill prior is learned by modeling the distribution of $z$ from the data. During downstream task learning, the skill prior guides exploration by regularizing the policy towards skills that are likely under the prior with a weighting coefficient $\alpha$:

\begin{equation}
    \pi^*(z|s) = \arg\max_\pi \mathbb{E}_\pi \left[ \sum_{t=0}^T r(s_t, a_t) - \alpha \cdot D_{\text{KL}}(\pi(z|s) \| p_{a}(z|s)) \right].
\end{equation}

This approach enables efficient learning of complex behaviors by leveraging the structure of previously seen tasks encoded in the skill prior.

\subsection{Contrastive Learning for Representation Learning}

Contrastive learning is a self-supervised technique that learns discriminative representations by contrasting positive and negative sample pairs. Initially developed for computer vision (\cite{chen2020simple}, \cite{he2020momentum}) and natural language processing (\cite{radford2021learning}), it has gained traction in RL for its ability to extract useful features from unlabeled data (\cite{eysenbach2022contrastive}), \cite{zheng2023stabilizing}). The objective is to learn representations that bring positive (similar) samples closer together and push negative (dissimilar) samples apart.

In this framework, given two inputs, $u$ (anchor) and $v$ (positive or negative), the model maximizes the similarity between $u$ and positive samples $v^+$ while minimizing the similarity with negative samples $v^-$. The contrastive loss function (\cite{ma2018noise}) is:

\begin{equation}
    \mathcal{L}_{\text{contrastive}} = \mathbb{E}_{(u, v^+)} \left[ \log \sigma(f(u, v^+)) \right] + \mathbb{E}_{(u, v^-)} \left[ \log (1 - \sigma(f(u, v^-))) \right],
\end{equation}

where $f(u, v)$ is a similarity function, and $\sigma(\cdot)$ is the sigmoid function. Positive pairs $(u, v^+)$ are drawn from the joint distribution, while negative pairs $(u, v^-)$ are sampled from the product of marginals.

\section{Proposed Method}

This section introduces the Dynamic Contrastive Skill Learning (DCSL) framework, explaining its architecture and core components. DCSL aims to cluster semantically similar behaviors as the same skill and adjust skill lengths dynamically. The framework consists of three main components: skill learning through contrastive learning, skill length relabeling, and downstream task learning. An overview of the entire DCSL framework is presented in Fig. \ref{fig:2}. The following subsections explore each component in depth.

\subsection{Extracting Skills with Contrastive Learning}
\label{section: extracting skills}

This section explains the process of skill learning using contrastive learning. Our goal is to learn skills such that similar behaviors are clustered into a single skill. First, we label an initial skill length $H_t$ for each $\{(s_t, a_t)\}_{t=1}^T$ in the given dataset $\mathcal{D}$. This is defined as the length of the skill when the initial state of the skill is $s_t$. The initial skill lengths are set to the same value.

\textbf{Skill Embedding}\quad We select four key states from each skill duration: the initial state $s_t$, the terminal state $s_{t+H_t-1}$, and two randomly sampled intermediate states $s_{t+a}$ and $s_{t+b}$ (where $t < t+a < t+b < t+H_t-1$). This approach provides a snapshot of the skill's progression in a unified representation, enabling a more effective representation of the overall trajectory and behavioral context. The selected states $\vec{s}_t=\{s_t, s_{t+a}, s_{t+b}, s_{t+H_t-1}\}$ are encoded into a skill embedding $z_t$ using an LSTM-based encoder $q_{\theta_q}(z|\vec{s})$. The loss function for skill embedding is:

\begin{align}
\label{eq:embedding_loss}
    \mathcal{L}_{\text{embedding}} = \mathbb{E}_{(\vec{s},\vec{a})\sim\mathcal{D}} \big[ \mathbb{E}_{q_{\theta_q}}(z|\vec{s}) \big[ &\lambda_{\text{BC}} \cdot \log \pi_{\theta_\pi}(\vec{a}|\vec{s}, z) \big] - \beta \cdot D_{\text{KL}}\big( q_{\theta_q}(z|\vec{s}) | p(\textbf{z}) \big) \nonumber \\
    &+ \lambda_{\text{SP}} \cdot D_{\text{KL}}\big( \textbf{sg}(q_{\theta_q}(z|\vec{s})) | p_{\theta_p}(z|s) \big) \big].
\end{align}

Here, $\vec{a}$ represents the actions corresponding to $\vec{s}$, while \(\lambda_{\text{BC}}\), \(\lambda_{\text{SP}}\), and \(\beta\) are weighting coefficients, and \( \theta_q \), \( \theta_\pi \), and \( \theta_p \) are trainable parameters. Additionally, \( \textbf{sg} \) denotes the stop gradient operator. The prior \( p(\textbf{z}) \) is defined as a tanh-transformed standard Gaussian distribution, represented as \( \text{tanh}(\mathcal{N}(0, 1)) \). The skill decoder $\pi(\vec{a}|\vec{s},z)$ reconstructs these actions using behavior cloning loss. The skill prior $p(z|s_t)$ captures the distribution of likely skills for a given state.

\begin{figure}[t]
    \centering
    \includegraphics[width=0.9\textwidth]{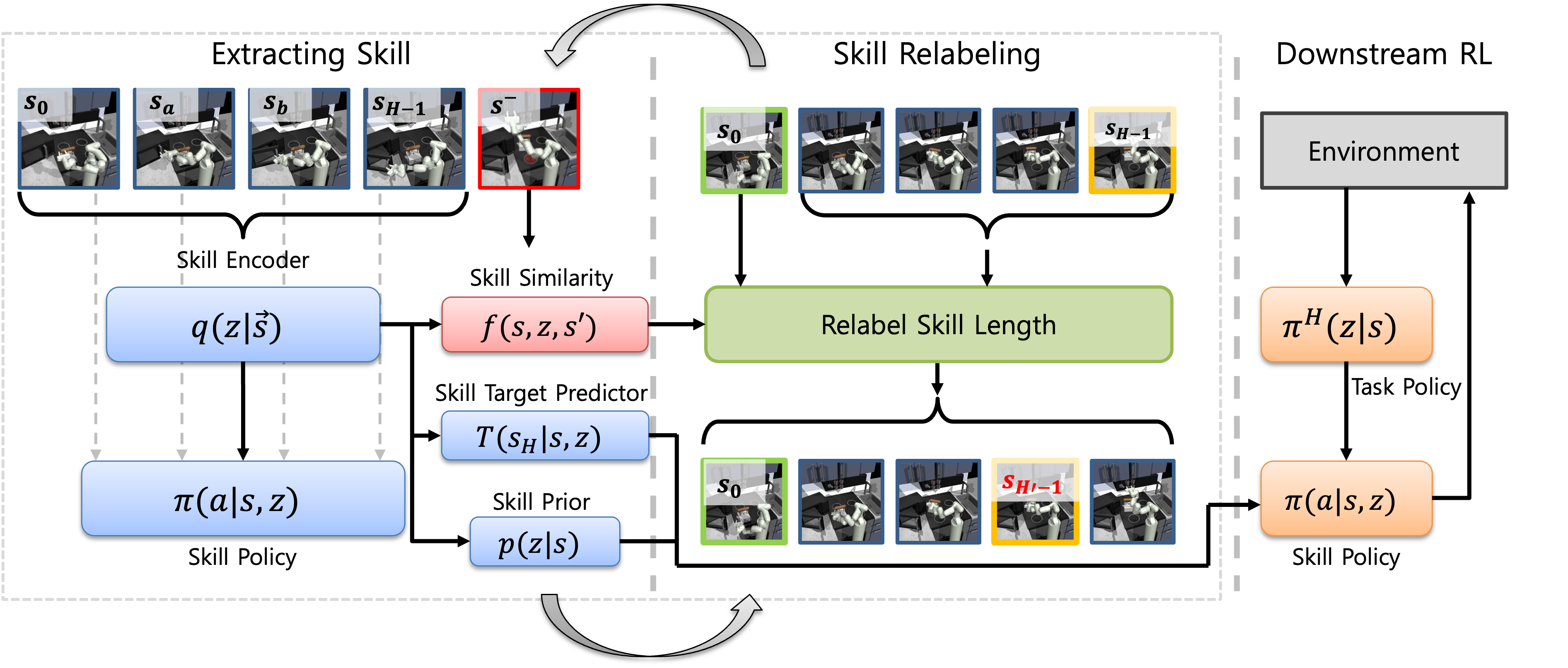}
    \caption{The overall architecture of our framework, including skill extraction, skill length relabeling, and the application of skills to downstream tasks. Our method samples four states to encode skills, with red-bordered states representing negative samples. In the relabeling process, green borders indicate the skill initial state, and yellow borders denote the skill terminate state. The learned skills are then utilized in downstream RL tasks.}
    \label{fig:2}
\end{figure}

\textbf{Contrastive Learning for Skill Discrimination}\quad Our main goal is to cluster similar behaviors into the same skill. To achieve this, we introduce a skill similarity function $f_{\theta_f}(s, z, s') = \langle \phi_{\theta_\phi}(s, z), \psi_{\theta_\psi}(s') \rangle$. Here, $\phi_{\theta_\phi}(s, z)$ is a skill-conditioned state representation, and $\psi_{\theta_\psi}(s')$ is a state representation, both of which are trainable networks (\cite{eysenbach2022contrastive}). Our core assumption is that semantically similar behaviors will display similar state change patterns after applying a skill. To train the skill similarity function, we need both positive and negative samples. Positive samples pair the initial state $s_t$ with the skill $z_t$ and an intermediate state $s_{t+b}$, chosen to capture long-term skill effects. Negative samples are states $s^-$ that are unreachable from $s_t$ given the skill $z_t$, sampled from different skill trajectories. This can be expressed as:

\begin{equation}
        s^- \sim \text{Uniform}(\{s \in \mathcal{D} | \exists z' \neq z, s \in \tau^{skill}(z')\}),
\end{equation}

where $\tau^{skill}(z) = \{s_t, \dots, s_{t+H_t-1}\}$ represents the set of states belonging to the trajectory of skill $z$. For simplicity, we denote this distribution as $p(s|z' \neq z)$. We use the Noise Contrastive Estimation (NCE) binary loss function (\cite{ma2018noise}) to train the skill similarity function. Here, $\sigma(\cdot)$ denotes the sigmoid function, while $\lambda_{\text{CL}}$ represents the weighting coefficient, and $\theta_f$ denotes the trainable parameter. The contrastive loss is as follows (a detailed analysis can be found in Appendix \ref{section:Contrastive_learning_analysis}.):

\begin{align}
\label{eq:contrastive_loss}
    \mathcal{L}_{\text{contrastive}} = \lambda_{\text{CL}} \cdot  \mathbb{E}_{(\vec{s}, \vec{a}) \sim \mathcal{D}} \Big[ \mathbb{E}_{q_{\theta_q}}(z|\vec{s}) \Big[ &-\log \sigma(f_{\theta_f}(s, z, s_{b})) \nonumber \\
    &- \mathbb{E}_{s^- \sim p(s|z' \neq z)} \left[ \log (1 - \sigma(f_{\theta_f}(s, z, s^-))) \right] \Big] \Big].
\end{align}

\textbf{Skill Terminate State Predictor}\quad For downstream task learning, determining the point to start the next skill is crucial due to the variable lengths of skills in our method. Inspired by SkiMo (\cite{shi2022skill}), we repurposed their dynamics module to function as a skill target state predictor specifically for use in downstream tasks. This predictor is capable of handling these variable skill lengths. Our approach predicts the skill target state $s_{t+H_t}$, which is the state after the skill has been executed from the initial state $s_t$. When the skill target state is reached during downstream task execution, the skill is considered complete, allowing the next skill to proceed.

We use a state encoder $E_{\theta_E}(s_t)$ and observation decoder $O_{\theta_O}(E_{\theta_E}(s_t))$ to map raw observations to a latent state space and ensure accurate state representations by reconstructing the original observations. The loss function for the skill target state predictor $T_{\theta_T}(\cdot)$ is as follows:

\begin{align}
\label{eq:target}
    \mathcal{L}_{\text{target}} = \sum_t \mathbb{E}_{\vec{s} \sim \mathcal{D}} \Big[ \mathbb{E}_{z_t \sim q_{\theta_q}}(z_t|\vec{s}) \Big[ & \lambda_{\text{RE}}  \| s_t - O_{\theta_O}(E_{\theta_E}(s_t)) \|^2 \nonumber \\
    &+ \lambda_{\text{ST}} \| T_{\theta_T}(E_{\theta_E}(s_t), z_t) - E_{\theta_E}(s_{t+H_t}) \|^2 \Big] \Big].
\end{align}

Here, $\lambda_{\text{RE}}$ and $\lambda_{\text{ST}}$ represent the weighting coefficients, and \( \theta_O, \theta_E, \theta_T \) are the trainable parameters. This loss includes the observation reconstruction term $ \| s_t - O_{\theta_O}(E_{\theta_E}(s_t)) \|^2 $ and the skill target state prediction term $\| T_{\theta_T} (E_{\theta_E}(s_t), z_t) - E_{\theta_E}(s_{t+H_t}) \|^2$.

\textbf{Objective Function}\quad The total objective function for the model is a combination of the skill embedding, contrastive learning, and skill target state predictor losses:

\begin{equation}
\label{eq:total_loss}
    \mathcal{L}_{\text{total}} = \mathcal{L}_{\text{embedding}} + \mathcal{L}_{\text{contrastive}} + \mathcal{L}_{\text{target}}.
\end{equation}

\subsection{Skill Length Relabeling}
\label{section: relabeling}

We introduce a dynamic skill length relabeling process based on the learned skill similarity function $f_{\theta_f}(s, z, s')$. Starting from each skill's initial state $s_t$ in the dataset, we obtain $z_t$ using the pre-relabeling skill length $H_t$. We then compute similarity values for subsequent states $s_{t+\alpha}$ ($\alpha>0$) and define the new skill length as the maximum number of consecutive time steps where the similarity remains above a threshold $\epsilon$:

\begin{equation}
    H_t' = 1 + \max \left\{ \alpha \,\bigg|\, f_{\theta_f}(s_t, z_t, s_{t+\alpha}) > \epsilon \right\}.
\end{equation}
    
This approach allows for dynamic adjustment of skill lengths based on observed state transitions, ensuring that each skill's duration accurately reflects the temporal extent of its effect. To ensure stable learning, relabeling is performed every $T_{\text{relabel}}$ training steps. Additionally, constraints on skill length ($\delta_{\min} \leq H_t \leq \delta_{\max}$) are applied to prevent abrupt changes. A detailed convergence analysis and theoretical justification are provided in Appendix \ref{section:relabeling_analysis}.

\subsection{Downstream Tasks Learning}

The learned skills can be applied to downstream tasks using both model-free and model-based approaches. As our research focuses on skill extraction and representation, we utilized previous approaches such as SPiRL (\cite{pertsch2021accelerating}) and SkiMo (\cite{shi2022skill}) for downstream learning. In the model-free setting, Soft Actor-Critic (SAC) (\cite{haarnoja2018soft}) is used to train a high-level policy that selects skills based on the current state. Meanwhile, in the model-based setting, skill target state prediction is combined with the Cross-Entropy Method (CEM) (\cite{rubinstein1997optimization}) to enable long-horizon planning through skill simulations.

A key feature of our framework in downstream learning is the dynamic adjustment of skill durations during execution. Unlike previous methods with fixed-length skills, our approach uses a skill target state predictor to adapt skill execution. Skills are continued to execute until the predicted target state is reached or until a certain number of timesteps, at which point the high-level policy selects the next skill. Detailed explanations of the model-free and model-based implementations are provided in Appendix \ref{section:downstream_rl_details}.

\section{Experiments}
\label{section: experiments}

We designed our experiments to answer the following three key questions: (1) Can the skills learned through DCSL capture meaningful behaviors that enable efficient performance on new tasks in both navigation and manipulation domains? (2) Is DCSL able to extract meaningful and common behaviors through skill length adjustment even in more complex or noisy data that include irrelevant or meaningless actions? (3) Are similar behaviors effectively clustered into similar skills, resulting in an efficiently learned skill representation space? Our experimental results provide compelling evidence to address these questions, demonstrating the effectiveness of DCSL across various environments and data conditions.

\subsection{Experimental Setup}
We conducted experiments in the antmaze, kitchen, and pick-and-place environments (Fig.\ref{fig:4}). Antmaze is a long-horizon navigation task, where the goal is for an Ant agent to reach a designated goal. We employed two variants, Antmaze-Medium and Antmaze-Large. For each environment, we used the ‘antmaze-medium-diverse’ and ‘antmaze-large-diverse’ datasets provided by D4RL (\cite{fu2020d4rl}). In the Kitchen environment, the agent is tasked with completing four independent subtasks. We utilized the ‘kitchen-mixed’ dataset from D4RL for this environment. In the Meta-world Pick-and-Place environment (\cite{yu2020meta}), we used datasets provided by \cite{yoo2022skills} containing varying levels of noise. These datasets are ordered from least to most noisy: Medium-Expert (ME), Medium-Replay (MR), and Replay (RP). 

\textbf{Baselines}\quad To assess the effectiveness of our framework, we compared it with several well-established baselines. First, to evaluate the utility of the skills learned by DCSL, we used two skill learning comparison groups: SPiRL (\cite{pertsch2021accelerating}) and SkiMo (\cite{shi2022skill}). SPiRL employs a learned skill prior to guide exploration, while SkiMo integrates skill dynamics with model-based planning. Both methods defined skills as fixed-length action sequences. We evaluated SkiMo in two variants: SkiMo-SAC and SkiMo-CEM, using the SAC algorithm and CEM respectively. Our research focuses primarily on the skill extraction process. To evaluate our extracted skills, we applied existing methods for downstream task learning, testing two versions of our DCSL framework: Ours-SAC and Ours-CEM. In all experimental environments, the predefined skill length for the baseline methods and the initial skill length for DCSL were set to 10. To constrain the skill space, we set the skill dimension to 5 for Antmaze and Kitchen environments, and 2 for Pick-and-Place.
To further evaluate DCSL's effectiveness in utilizing offline datasets for task performance, we included additional baselines. We compared against Behavioral Cloning (BC) (\cite{pomerleau1988alvinn}), which learns policies directly from demonstration data, and Conservative Q-Learning (CQL) (\cite{kumar2020conservative}), an offline RL method that ensures policy actions remain within the data distribution. We also included CQL+Off-DADS (\cite{sharma2020emergent}), which learns skills by maximizing mutual information between skills and trajectories, and CQL+OPAL (\cite{ajay2020opal}), which uses unsupervised learning to discover skills that reduce temporal abstraction. While CQL+Off-DADS and CQL+OPAL also learn skills, they differ from DCSL in their use of CQL for high-level policy learning.

\subsection{Experimental Results}
\label{subsection: results}

Fig. \ref{fig:overall_performance} and Table \ref{table:task_performance_comparison} present the results for each environment through learning graphs and success rates, while Table \ref{table:step_length_comparision} shows the number of timesteps required to complete the tasks. These results demonstrate the performance and efficiency of our DCSL framework across various environments and task complexities.

\textbf{Antmaze-medium}\quad The Antmaze environment is a sparse reward task where the agent only receives a reward upon reaching the goal. As a result, it requires extensive maze exploration, with skill learning efficiency dependent on the clustering of similar behaviors. Effective clustering offers useful skill representation, redundancy reduction, and improved generalization. Results from Antmaze-Medium and Antmaze-Large demonstrate DCSL's effective skill clustering and representation capabilities. In Antmaze-Medium, Ours-SAC outperformed SkiMo-SAC, suggesting DCSL generated more accurate skill representations, leading to improved exploration efficiency. SkiMo-CEM showed high success rates in this environment, while Ours-CEM struggled to achieve significant performance. This discrepancy can be attributed to DCSL's variable skill lengths, which destabilize CEM's planning horizon. The inconsistent time scales hinder CEM's effectiveness in long-horizon tasks, exacerbating cumulative errors. Comparing with other baselines utilizing offline datasets, we observe that BC and CQL, which do not employ skill learning, struggle to reach the goal. While DCSL outperforms CQL+Off-DADS, CQL+OPAL achieves the highest success rate. This performance difference can be attributed to the varying approaches in learning high-level policies, similar to what we observed with SkiMo-CEM.

\textbf{Antmaze-large}\quad In Antmaze-Large, Ours-SAC significantly outperformed all other methods, demonstrating DCSL's generalization ability in more complex environments. The performance gap with SkiMo-SAC implies DCSL better addresses the sparse reward problem. CEM-based methods' performance degradation highlights DCSL's superior skill representation space learning in challenging environments. These results prove DCSL enhanced exploration efficiency and generalization ability across various environmental scales through effective skill clustering and representation. In comparison to other offline dataset baselines, BC and CQL face learning difficulties similar to those in Antmaze-medium. Although CQL+OPAL demonstrates high performance, DCSL achieves a higher success rate due to its superior exploration efficiency. This highlights DCSL's ability to effectively navigate more complex environments.

\textbf{Kitchen}\quad Unlike the Antmaze environment, the Kitchen environment demands precise object manipulation skills. DCSL's performance here demonstrates its flexibility and adaptability. Ours-SAC successfully completed almost all four subtasks, indicating DCSL's effectiveness in tasks requiring delicate action sequences, despite using state-based skill embeddings. Our approach performed similarly or better than action sequence-based methods, suggesting DCSL better captures task essences by focusing on state transitions. While Ours-SAC and SkiMo-CEM completed a similar number of subtasks, our method required fewer timesteps to accomplish the tasks, as shown in Table \ref{table:step_length_comparision}. This efficiency highlights the effectiveness of the skills that DCSL has learned. Although Ours-CEM showed slightly lower performance, the overall results of DCSL are promising, particularly considering its ability to generalize across different task types. This underscores DCSL's strength in learning transferable skills that are effective in both exploration-centric tasks and those requiring precise manipulation, demonstrating robust performance across diverse environments. When compared to other offline dataset baselines, DCSL significantly outperforms its counterparts. The superior performance in this manipulation task, which requires precise control, demonstrates the effectiveness of state-transition based skills in capturing intricate action sequences. This result underscores DCSL's adaptability to tasks demanding detailed object manipulation.

\textbf{Pick-and-Place}\quad The Pick-and-Place environment evaluates the impact of data quality on skill learning, showcasing the importance of DCSL's skill length adjustment. In the high-quality ME dataset, all methods achieved similar success rates. However, performance differences became more pronounced in the lower-quality MR dataset, with DCSL showing improved performance. In the noisy RP dataset, Ours-SAC outperformed all other variants, demonstrating DCSL's effectiveness in handling data with high noise and irrelevant actions. These results show that DCSL maintains strong performance across datasets of varying quality, with its importance becoming more evident as data quality decreases. In the MR and RP datasets, our method not only achieved high success rates but also required the least number of timesteps to complete the tasks.

\begin{figure}[t]
\centering
\includegraphics[width=0.9\textwidth]{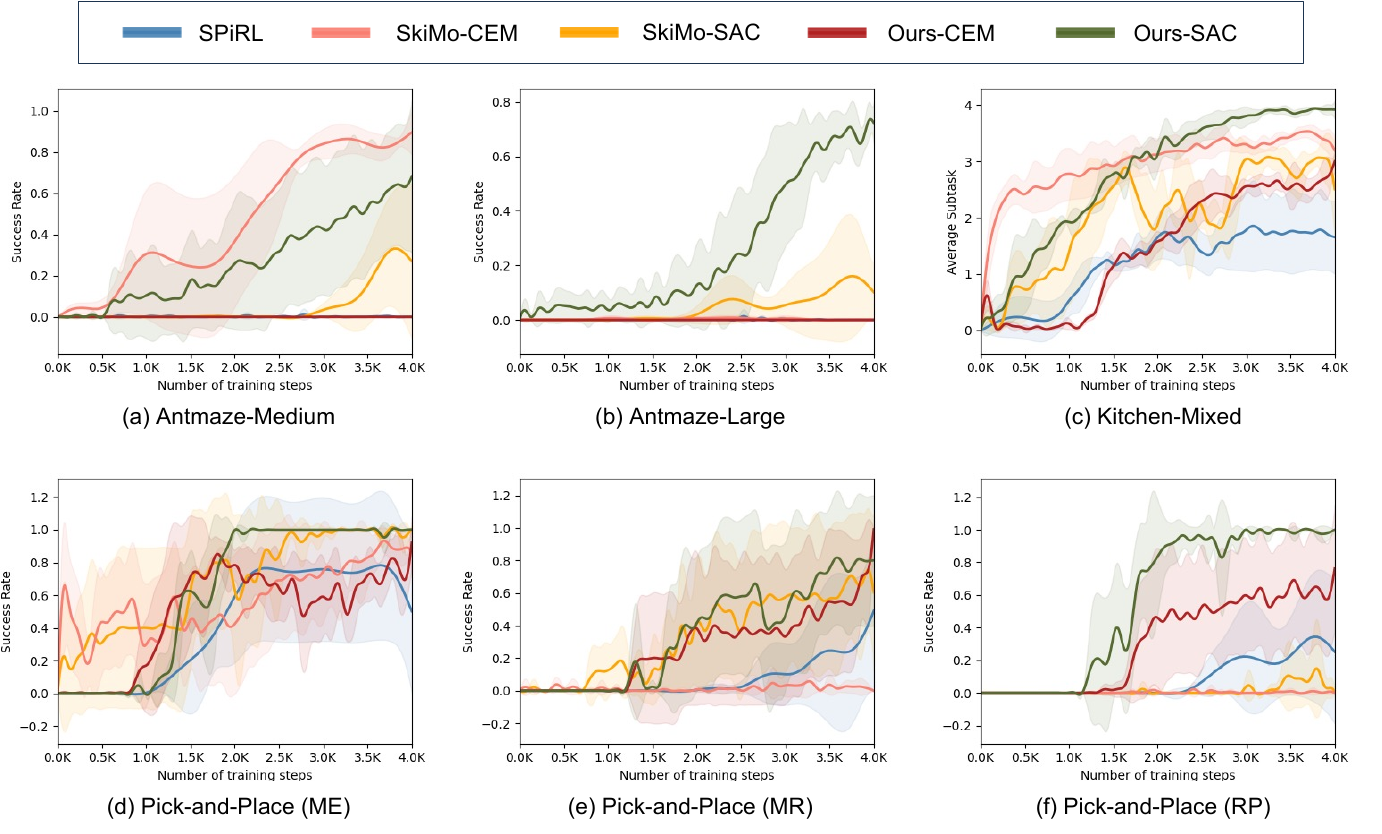}
\caption{Performance comparison across Antmaze-Medium, Antmaze-Large, Kitchen, and Pick-and-Place environments (with 5 different random seeds). Results demonstrate the adaptability and effectiveness of DCSL in handling diverse and long-horizon tasks. Dark lines represent the average returns, and shaded areas represent standard deviations. A single training step was conducted after rolling out one episode.}
\label{fig:overall_performance}
\vskip -0.54cm
\end{figure}

\begin{table}[t]
\centering
\caption{Comparison of success rates. Boxes containing `-' denote results that could not be obtained due to the unavailability of a public implementation code.}
\label{table:task_performance_comparison}

\begin{adjustbox}{width=\linewidth}
\begin{tabularx}{\linewidth}{XXXXXX}
\hline
 & \multicolumn{5}{c}{Method} \\
\cline{2-6}
Environment & BC & CQL & {\scriptsize CQL+Off-DADS} & CQL+OPAL & Ours-SAC \\
\hline
Ant-Medium & 0.0 & 53.7 {\scriptsize $\pm$ 6.1} & 59.6 {\scriptsize $\pm$ 2.9} & \textbf{81.1} {\scriptsize $\pm$ 3.1} & 68.0 {\scriptsize $\pm$ 36.9} \\

Ant-Large & 0.0 & 14.9 {\scriptsize $\pm$ 3.2} & - & 70.3 {\scriptsize $\pm$ 2.9} & \textbf{73.7} {\scriptsize $\pm$ 5.9} \\

Kitchen & 47.5 & 52.4 {\scriptsize $\pm$ 2.5} & - & 69.3 {\scriptsize $\pm$ 2.7} & \textbf{94.7} {\scriptsize $\pm$ 1.5} \\
\hline
\end{tabularx}
\end{adjustbox}
\end{table}

\begin{table}[t]
\centering
\caption{Comparison of the number of timesteps required to complete tasks across different environments and algorithms. }
\label{table:step_length_comparision}

\begin{adjustbox}{width=\linewidth}
\begin{tabularx}{\linewidth}{XXXXXX}
\hline
 & \multicolumn{5}{c}{Method} \\
\cline{2-6}
Environment & SPiRL & SkiMo-CEM & SkiMo-SAC & Ours-CEM & Ours-SAC \\
\hline
Ant-Medium & 988.5 {\scriptsize $\pm$ 19.8} & \textbf{311.2} {\scriptsize $\pm$ 95.7} & 833.7 {\scriptsize $\pm$ 288.0} & 1000.0 {\scriptsize $\pm$ 0.0} & 453.6 {\scriptsize $\pm$ 143.7} \\

Ant-Large & 990.2 {\scriptsize $\pm$ 19.5} & 993.5 {\scriptsize $\pm$ 13.9} & 881.5 {\scriptsize $\pm$ 164.9} & 1000.0 {\scriptsize $\pm$ 0.0} & \textbf{672.2} {\scriptsize $\pm$ 72.9} \\

Kitchen & 276.6 {\scriptsize $\pm$ 5.9} & 205.8 {\scriptsize $\pm$ 29.0} & 251.3 {\scriptsize $\pm$ 23.7} & 262.0 {\scriptsize $\pm$ 20.1} & \textbf{165.1} {\scriptsize $\pm$ 4.4} \\

PP (ME) & 87.8 {\scriptsize $\pm$ 65.0} & \textbf{54.1} {\scriptsize $\pm$ 21.3} & 58.0 {\scriptsize $\pm$ 6.8} & 76.0 {\scriptsize $\pm$ 15.8} & 80.1 {\scriptsize $\pm$ 13.7} \\

PP (MR) & 138.0 {\scriptsize $\pm$ 63.2} & 184.8 {\scriptsize $\pm$ 24.0} & 87.3 {\scriptsize $\pm$ 57.6} & 62.9 {\scriptsize $\pm$ 5.8} & \textbf{56.1} {\scriptsize $\pm$ 5.1} \\

PP (RP) & 130.6 {\scriptsize $\pm$ 69.4} & 193.2 {\scriptsize $\pm$ 13.5} & 200.0 {\scriptsize $\pm$ 0.0} & 85.1 {\scriptsize $\pm$ 22.3} & \textbf{64.4} {\scriptsize $\pm$ 16.3} \\ 
\hline
\end{tabularx}
\end{adjustbox}
\end{table}

\subsection{Ablation Study}
\label{subsection: ablation}

\begin{figure}[t]
\centering
\includegraphics[width=1\textwidth]{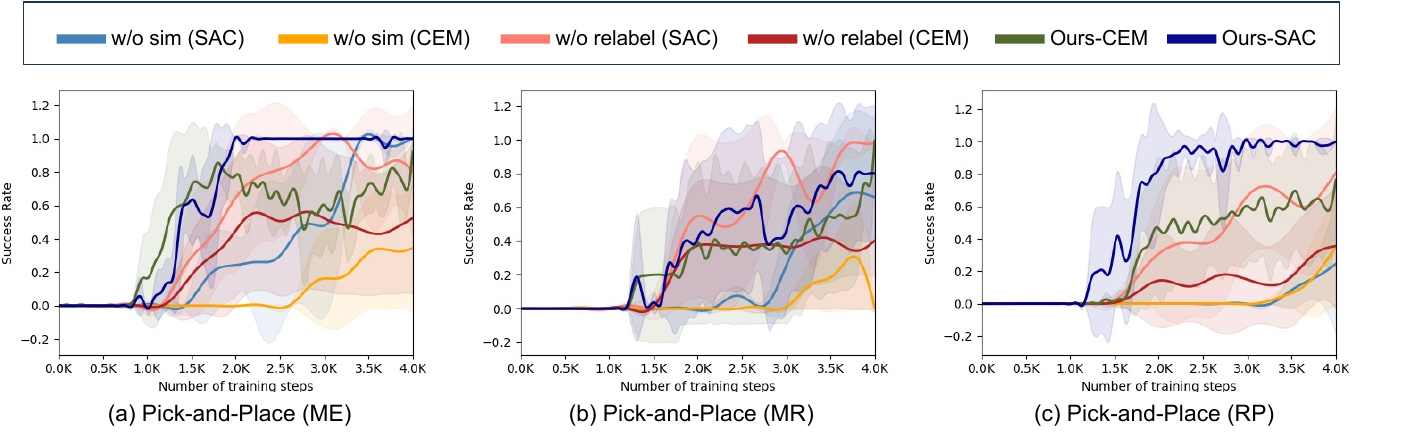}
\caption{Ablation study results in the Pick-and-Place environment across the ME, MR, and RP datasets, comparing the impact of the similarity function and skill length relabeling in our framework. These results highlight the significance of these techniques in maintaining performance robustness in suboptimal and noisy environments.}
\label{fig:ablation1}
\vskip -0.54cm
\end{figure}

The ablation study (Fig. \ref{fig:ablation1}) conducted in the Pick-and-Place environment utilized three datasets (ME, MR, and RP) to evaluate the impact of the similarity function and skill length readjustment process in the DCSL framework. In the ME dataset, all methods achieved similarly high success rates, suggesting that skill length readjustment has relatively little impact on high-quality data. However, without a similarity function, efficient skills cannot be effectively learned, resulting in the high-level policy requiring more training steps to converge. The benefits of skill length readjustment became more apparent in the MR dataset. Ours-CEM outperformed `w/o relabel (CEM)', indicating that skill length readjustment enhances the noise handling capability of CEM-based methods. The importance of skill length readjustment was most pronounced in the RP dataset. Ours-SAC achieved the highest success rate, while `w/o relabel (CEM)' and `w/o sim' showed significantly lower success rates. These results demonstrate that the similarity function and skill length readjustment process in DCSL plays a crucial role in maintaining consistently strong performance across datasets of varying quality.

\begin{figure}[t]
\centering
    \begin{subfigure}{.32\textwidth}
        \centering
        \includegraphics[width=1\textwidth]{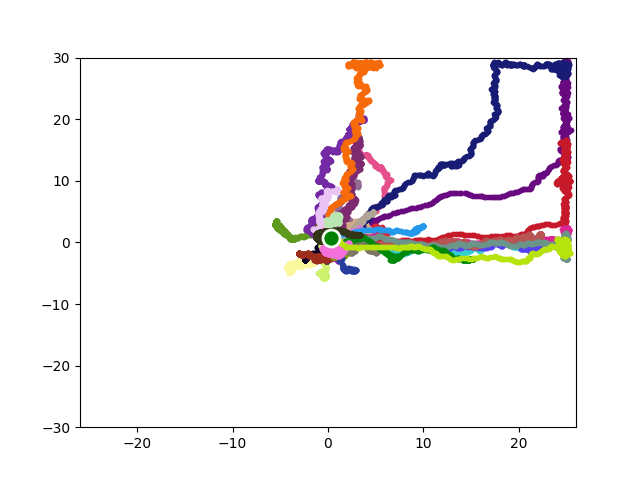}
        \caption{SPiRL}
        \label{fig:skill_viz_spirl}

    \end{subfigure}
    \begin{subfigure}{.32\textwidth}
        \centering
        \includegraphics[width=1\textwidth]{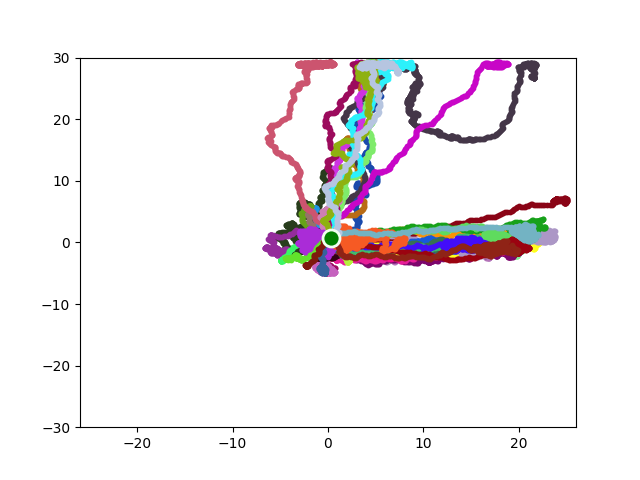}
        \caption{SkiMo}
        \label{fig:skill_viz_skimo}

    \end{subfigure}
    \begin{subfigure}{.32\textwidth}
        \centering
        \includegraphics[width=1\textwidth]{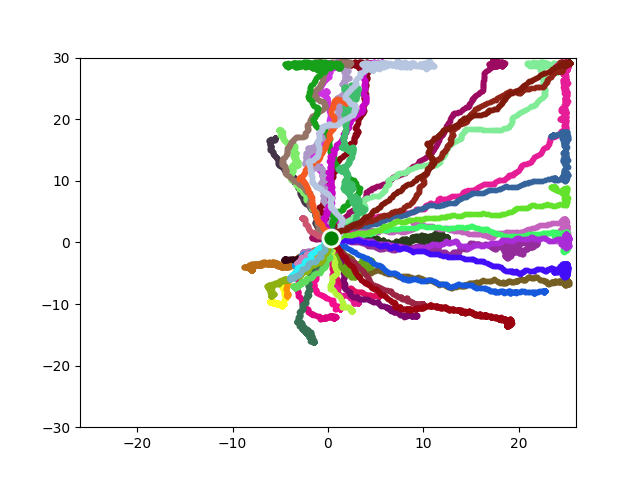}
        \caption{DCSL (Ours)}
        \label{fig:skill_viz_dcsl}
    \end{subfigure}
    \caption{Comparison of execution results for 50 randomly sampled skills in the Antmaze environment. Each colored line represents the trajectory of the Ant agent during the execution of a single skill. While SPiRL and SkiMo show similar behavior patterns repeating across multiple skills, DCSL demonstrates movements in various directions and patterns. This suggests that DCSL has learned a more efficient skill representation space by effectively clustering similar behaviors.}
    \label{fig:skill_viz}
\vskip -0.54cm
\end{figure}

\subsection{Qualitative Evaluation of Skill Clustering}

We conducted a qualitative evaluation (Fig. \ref{fig:skill_viz}) to assess whether DCSL effectively clustered similar behaviors into the same skill, thus efficiently learning the skill representation space. For this purpose, we modified the Antmaze environment by removing all obstacles except the borders and placing the Ant agent at the center of the maze. We sampled 50 random skills and visualized the Ant's movements during the execution of each skill.

The results show that for SPiRL and SkiMo, similar behavior patterns repeatedly appeared across multiple random skills. This suggests that these methods are experiencing a dimensional collapse, where similar behaviors are learned as different skills. In contrast, DCSL exhibited a wide range of behavior patterns, including movements in various directions. This demonstrates that DCSL effectively clustered similar behaviors, leading to an efficient learning of the skill representation space. These findings indicate that DCSL can learn a more diverse and meaningful set of skills compared to existing methods.

\section{Conclusion}

In this paper, we introduced Dynamic Contrastive Skill Learning (DCSL), a novel framework that addresses challenges in long-horizon reinforcement learning tasks by focusing on state-transition rather than fixed action sequences. DCSL makes two key contributions: it uses contrastive learning to cluster semantically similar behaviors, enabling more effective skill extraction, and it dynamically adjusts skill lengths based on a learned similarity function. Our experiments in navigation and manipulation tasks demonstrated DCSL's ability to extract useful skills from diverse datasets, including those containing a mix of relevant behaviors and irrelevant or random actions, enhancing generalization and adaptability across various environments and showing competitive performance compared to existing methods in task completion and efficiency.

Looking ahead, we envision several promising directions for future work. First, we aim to extend the use of the learned similarity function beyond skill learning and length adjustment to downstream task learning, potentially improving task performance by leveraging the semantic understanding of skills during execution. Second, we plan to explore the integration of language instructions with skill learning using contrastive methods, enabling agents to follow natural language commands and generate relevant skills, thus expanding the framework's potential in interactive environments. Lastly, we intend to investigate the application of DCSL in multi-agent systems and dynamic task structures, further testing its adaptability and scalability. These extensions would not only enhance the capabilities of DCSL but also bridge the gap between skill learning, language understanding, and task execution in reinforcement learning.

\section*{Reproducibility statement}

We implemented all learning models using PyTorch. The experiments were conducted in environments provided by D4RL (\cite{fu2020d4rl}) and Metaworld (\cite{yu2020meta}. A detailed explanation of our framework's overall algorithm, model implementation details, and environment configurations are provided in Appendix \ref{sec:implentation_details} and Appendix \ref{sec:experimental_details}.

\section*{Acknowledgements}
This research was supported by the Challengeable Future Defense Technology Research and Development Program through the Agency For Defense Development(ADD) funded by the Defense Acquisition Program Administration(DAPA) in 2024(No.915108201)

\bibliography{iclr2025_conference}
\bibliographystyle{iclr2025_conference}

\appendix

% \section{Appendix}
% You may include other additional sections here.

\renewenvironment{proof}[1][\proofname]{\par\noindent\textit{Proof.}\ }{\par\medskip}

\section{Theoretical Analysis}
\label{section:analysis}

\subsection{Theoretical Analysis of Contrastive Learning for Skill Discrimination}
\label{section:Contrastive_learning_analysis}

This section provides a rigorous theoretical foundation for understanding how contrastive learning enables effective skill clustering in DCSL. We present two key theorems that demonstrate the relationship between our contrastive learning objective and mutual information maximization, as well as its connection to true skill discrimination.

\subsubsection{Contrastive Learning Objective and Mutual Information Maximization}

The contrastive learning objective used in DCSL is defined as:

\begin{align}
        \mathcal{L}_{\text{contrastive}} = \lambda_{\text{CL}} \cdot \mathbb{E}_{(\vec{s}, \vec{a}) \sim \mathcal{D}} \Big[ \mathbb{E}_{q_{\theta_q}}(z|\vec{s}) \Big[ &-\log \sigma(f_{\theta_f}(s, z, s_{b})) \nonumber \\
    &- \mathbb{E}_{s^- \sim p(s|z' \neq z)} \left[ \log (1 - \sigma(f_{\theta_f}(s, z, s^-))) \right] \Big] \Big].
\end{align}

where $f_{\theta_f}(s,z,s') = \langle\phi_{\theta_\phi}(s,z), \psi_{\theta_\psi}(s')\rangle$ is the skill similarity function.

\begin{theorem}
    Assuming an optimal discriminator, the contrastive learning objective maximizes the mutual information between skills and state transitions.
\end{theorem}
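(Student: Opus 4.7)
The plan is to follow the classical density-ratio argument underlying noise-contrastive estimation, adapted to our skill-conditioned setting. First I would rewrite the per-sample contrastive term as a binary classification loss: positive triples $(s,z,s_b)$ are drawn from the joint $p(s,z)\,p(s'\mid s,z)$ induced by the dataset and the encoder $q_{\theta_q}$, while negative triples $(s,z,s^-)$ are drawn from $p(s,z)\,p(s'\mid z'\neq z)$. Treating the two sources as equally likely, the per-point loss collapses to a standard cross-entropy at each fixed triple $(s,z,s')$.

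Second, I would compute the Bayes-optimal discriminator by taking a pointwise functional derivative of the conditional loss with respect to $\sigma(f_{\theta_f}(s,z,s'))$ and setting it to zero. This yields
\begin{equation}
    \sigma(f^{*}(s,z,s')) \;=\; \frac{p(s'\mid s,z)}{p(s'\mid s,z) + p(s'\mid z'\neq z)},
\end{equation}
or equivalently the log-density-ratio form
\begin{equation}
    f^{*}(s,z,s') \;=\; \log \frac{p(s'\mid s,z)}{p(s'\mid z'\neq z)}.
\end{equation}
This is the standard consequence of the NCE binary objective.

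Third, I would relate this optimum to mutual information. Under the assumption that the negative distribution approximates the marginal over reachable next states, $p(s'\mid z'\neq z)\approx p(s'\mid s)$, the optimal score is exactly the pointwise mutual information between $z$ and $s'$ conditioned on $s$. Taking expectation over the positive joint gives
\begin{equation}
    \mathbb{E}_{p(s,z,s')}\!\bigl[f^{*}(s,z,s')\bigr] \;=\; \mathbb{E}_{p(s,z)}\!\bigl[\,\KL\!\bigl(p(s'\mid s,z)\,\|\,p(s'\mid s)\bigr)\bigr] \;=\; I\!\bigl(z;\,s'\mid s\bigr),
\end{equation}
so at the optimum the learned similarity $f_{\theta_f}$ parametrically estimates the conditional mutual information between the skill $z$ and the transition $s\to s'$. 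An InfoNCE-style inequality then expresses $-\mathcal{L}_{\text{contrastive}}$ (up to additive constants coming from the sigmoid normalization) as a lower bound on this mutual information, so minimizing the contrastive loss maximizes a variational lower bound on $I(z;\,s\to s')$.

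The main obstacle I expect is handling the mismatch between the negative distribution $p(s'\mid z'\neq z)$ used in practice and the true marginal $p(s'\mid s)$ that appears in the definition of mutual information. I would address this either by explicitly assuming that negative samples are drawn i.i.d.\ from the state marginal (a standard simplification in NCE analyses), or by keeping the result in density-ratio form with respect to $p(s'\mid z'\neq z)$ and absorbing any discrepancy into an additive KL correction. A secondary subtlety is that $z$ is itself stochastic through $q_{\theta_q}$; I would handle this by integrating $z$ out consistently in every expectation so that the joint distribution over $(s,z,s')$ is well defined throughout the argument.
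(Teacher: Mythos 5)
Your proposal follows essentially the same route as the paper's proof: characterize the Bayes-optimal discriminator of the NCE binary objective as the density ratio $\sigma(f^{*}(s,z,s')) = p(s'\mid s,z)/\bigl(p(s'\mid s,z)+p_{\text{neg}}(s')\bigr)$ and then identify the resulting quantity with the conditional mutual information $I(Z;S'\mid S)$. If anything you are more careful than the paper at its loosest point --- the paper silently replaces the actual negative distribution $p(s\mid z'\neq z)$ by the marginal $p(s')$ and asserts the optimal loss \emph{equals} $-I(Z;S'\mid S)$, whereas you flag that mismatch explicitly and state the conclusion as a variational (InfoNCE-style) lower bound on the mutual information, which is the technically safer formulation.
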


\begin{proof}
The optimal solution for the inner expectation of the contrastive loss is achieved when:
    $$
\sigma(f_{\theta_f}(s,z,s')) = \frac{p(s'|s,z)}{p(s'|s,z) + p(s')}
$$

Substituting this into the contrastive loss and simplifying:

$$
\mathcal{L}_{\text{contrastive}}^* = -\mathbb{E}_{(s,z,s') \sim p(s,z,s')}\left[\log \frac{p(s'|s,z)}{p(s')}\right] = -I(Z;S'|S)
$$
Where $I(Z;S'|S)$ is the conditional mutual information between skills $Z$ and future states $S'$ given the current state $S$. 

\end{proof}

This result demonstrates that minimizing the contrastive loss is equivalent to maximizing the mutual information between skills and state transitions, leading to more discriminative skill representations. As a corollary, maximizing this mutual information promotes improved skill clustering. Intuitively, this can be explained by three key points: First, higher mutual information indicates that skills are more predictive of future states. Second, similar state transitions will be associated with similar skill representations to maximize this predictiveness. Finally, this similarity in skill representations for similar behaviors naturally leads to clustering in the skill space.

\subsubsection{Lower Bound on Skill Discrimination}

\begin{theorem}
    The contrastive learning objective provides a lower bound on the true skill discrimination task.
\end{theorem}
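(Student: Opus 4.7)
The plan is to leverage Theorem 1 directly: since at the optimum $-\mathcal{L}_{\text{contrastive}} = I(Z;S'|S)$, the same substitution argument applied to a general (sub-optimal) discriminator will yield a one-sided inequality. Before carrying this out I would first pin down a precise meaning of the ``true skill discrimination task''. The natural formalization, consistent with the NCE setup, is the binary Bayes task of deciding whether a candidate $s'$ was drawn from the skill-conditional $p(s'|s,z)$ (positive) or from the marginal $p(s')$ (negative); the quality of the best possible discriminator for this task is a monotone function of $I(Z;S'|S)$, so bounding the mutual information from below also bounds discrimination performance from below.

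Second, I would establish the variational inequality explicitly. Plugging the Bayes-optimal discriminator $\sigma(f^{*}) = p(s'|s,z)/(p(s'|s,z)+p(s'))$ into the contrastive loss gives $\mathcal{L}_{\text{contrastive}}^{*} = -I(Z;S'|S)$, exactly as in Theorem 1. For any parameterized $f_{\theta_f}$, convexity of the binary cross-entropy together with Jensen's inequality shows $\mathcal{L}_{\text{contrastive}}(f_{\theta_f}) \geq \mathcal{L}_{\text{contrastive}}^{*}$, whence
$$
I(Z;S'|S) \;\geq\; -\mathcal{L}_{\text{contrastive}}(f_{\theta_f}).
$$
This is the InfoNCE-style variational lower bound adapted to the state-anchored conditional setting used in DCSL; constants arising from the weighting $\lambda_{\text{CL}}$ and the effective number of negatives per batch are absorbed into the inequality in a straightforward way.

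Third, I would connect the mutual-information bound to classification performance via Fano's inequality, so the statement reads as a bound on ``discrimination'' rather than on ``information''. Specifically, the Bayes error $P_e$ in recovering the generating skill $z$ from $(s,s')$ satisfies $H(Z|S,S') \leq H_{2}(P_{e}) + P_{e}\log(|\mathcal{Z}|-1)$, and since $I(Z;S'|S) = H(Z|S) - H(Z|S,S')$, larger $I(Z;S'|S)$ forces smaller $P_{e}$. Chaining this with the inequality from the previous step yields the claim: $-\mathcal{L}_{\text{contrastive}}$ is a lower bound on the achievable skill-discrimination accuracy.

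The main obstacle I anticipate is precisely the first step — nailing down which task the theorem refers to — because the three natural candidates (multi-way skill classification, binary positive/negative discrimination, or $I(Z;S'|S)$ itself) each produce cosmetically different bounds. The cleanest route is to identify ``the true skill discrimination task'' with $I(Z;S'|S)$, so that the theorem is essentially a corollary of Theorem 1 plus the observation that $\mathcal{L}_{\text{contrastive}}$ attains its infimum at the Bayes-optimal discriminator; the Fano step is then an interpretive add-on rather than a load-bearing part of the argument.
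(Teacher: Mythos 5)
Your core argument is sound and is, at heart, the same variational/Bayes-optimality argument the paper uses, but you formalize the target differently. The paper defines ``the true skill discrimination task'' explicitly: it introduces a binary variable $C(s,z,s')$ indicating reachability of $s'$ from $s$ under $z$, writes the task as maximizing $\mathbb{E}_{p(s,z,s')}[\log p(C{=}1|s,z,s')] + \mathbb{E}_{p(s,z)p(s')}[\log p(C{=}0|s,z,s')]$, and then simply asserts that the contrastive objective lower-bounds this Bayes-optimal log-likelihood --- i.e., the inequality is nothing more than ``a parametric discriminator $\sigma(f_{\theta_f})$ cannot beat the true posterior $p(C|s,z,s')$.'' Your route instead identifies the task with $I(Z;S'|S)$, reuses Theorem~1 to equate the optimum of the contrastive objective with $-I(Z;S'|S)$, and derives the same one-sided inequality; this buys a cleaner link to Theorem~1, whereas the paper's framing keeps the statement at the level of classification likelihoods without invoking the MI identification. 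Two caveats on your write-up: the inequality $\mathcal{L}_{\text{contrastive}}(f_{\theta_f}) \geq \mathcal{L}_{\text{contrastive}}^{*}$ is not really a consequence of ``convexity plus Jensen''; it follows from pointwise Bayes optimality of the posterior under expected binary cross-entropy (Gibbs' inequality), which is exactly the step the paper leans on. And your Fano add-on is stated backwards: Fano's inequality lower-bounds the error probability, so a large $I(Z;S'|S)$ only \emph{permits} small $P_e$ rather than forcing it, and with a continuous skill space $\mathcal{Z}$ the $\log(|\mathcal{Z}|-1)$ term is not even well defined --- since you flag this step as non-load-bearing, it does not invalidate the proof, but it should be dropped or replaced by a statement about the Bayes-optimal binary discriminator, which is the formalization the paper actually adopts. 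Finally, be careful with sign conventions: the paper's $\mathcal{L}_{\text{contrastive}}$ is a nonnegative loss while the classification objective is nonpositive, so the bound should be stated for the negated NCE objective (the paper's own displayed inequality shares this looseness).
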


\begin{proof}
   Let $C(s,z,s')$ be a binary random variable indicating whether $s'$ is reachable from $s$ using skill $z$. The true skill discrimination task can be formulated as maximizing:
    $$
\mathbb{E}_{(s,z,s') \sim p(s,z,s')}[\log p(C=1|s,z,s')] + \mathbb{E}_{(s,z) \sim p(s,z), s' \sim p(s')}[\log p(C=0|s,z,s')]
$$
It can be shown that our contrastive loss provides a lower bound on this objective:
$$
\mathcal{L}_{\text{contrastive}} \leq \mathbb{E}_{(s,z,s') \sim p(s,z,s')}[\log p(C=1|s,z,s')] + \mathbb{E}_{(s,z) \sim p(s,z), s' \sim p(s')}[\log p(C=0|s,z,s')]
$$
\end{proof}

This theorem demonstrates that by optimizing our contrastive loss, we are effectively learning to discriminate between reachable and unreachable states given a skill, which is crucial for meaningful skill clustering.

\subsubsection{Conclusion}

These theoretical results provide a rigorous foundation for understanding why contrastive learning leads to effective skill clustering in DCSL. By maximizing the mutual information between skills and state transitions, and providing a lower bound on the true skill discrimination task, our method naturally groups similar behaviors into coherent skill representations.

\subsection{Theoretical Analysis of Skill Length Relabeling}
\label{section:relabeling_analysis}

This section provides an intuitive explanation and brief analysis illustrating how constraining skill lengths and performing periodic relabeling contribute to stabilizing the training process.

\subsubsection{Reducing Training Instability by Limiting Skill Length Variations}

By enforcing constraints on skill lengths ($\delta_{\min} \leq H_t \leq \delta_{\max}$), we prevent the skill duration $H_t$ from changing abruptly during the relabeling process. Skills group the data into distinct behavioral categories, allowing the model to handle each category with specialized representations. Sudden large changes in skill lengths can cause abrupt shifts in the data distribution, potentially leading to unstable training dynamics. By limiting the variation in $H_t$, we ensure that changes in the data distribution occur gradually, allowing the model to adapt more smoothly.

Additionally, small changes in $H_t$ lead to gradual transitions in the loss landscape, resulting in more consistent gradient updates during backpropagation. This consistency helps stabilize the learning process by preventing drastic fluctuations in gradients.

\subsubsection*{Analysis: Bounding Changes in Loss and Gradients}

When the change in skill length $\Delta H_t = H_t' - H_t$ is limited by $|\Delta H_t| \leq \delta_{\max} - \delta_{\min}$, the variations in the loss function $\mathcal{L}_{\text{total}}$ and its gradients with respect to model parameters are expected to be bounded. While exact mathematical bounds depend on the specific properties of the loss function and model architecture, constraining skill lengths reduces the potential for large fluctuations in loss and gradients, thereby contributing to training stability.

\subsubsection{Model Adaptation through Periodic Relabeling}

Performing skill length relabeling every $T$ training steps allows the model sufficient time to adapt to the current skill assignments before they are updated again. Reducing the frequency of relabeling prevents the model from continuously adjusting to new skill assignments, minimizing potential disruptions during training. This periodic approach enables the model parameters to be updated based on a consistent set of skills over multiple iterations, facilitating more stable convergence.

\subsubsection*{Analysis: Reducing Gradient Variance}

By maintaining consistent skill assignments over $T$ training steps, the variability in parameter updates can be reduced. While we do not assume that gradients are independent or identically distributed across steps, the consistent skill assignments help in smoothing out fluctuations in the training process. Averaging updates over multiple steps with stable skill assignments contributes to more consistent gradient directions and magnitudes, aiding in stable learning.

\subsubsection{Ensuring Stability through the Combination of Constraints and Periodicity}

Combining constraints on skill lengths with periodic updates ensures that changes to the model's input distribution and internal representations occur gradually. This gradual evolution minimizes abrupt shifts that could destabilize training. Stable skill lengths and periodic updates enable the model to effectively learn skill embeddings that capture semantically similar state transitions, enhancing representation learning.

\subsubsection*{Analysis: Smoothing the Optimization Landscape}

Constraining skill lengths and performing periodic relabeling help smooth the optimization landscape by preventing sudden changes in the loss function with respect to model parameters. This smoothing effect reduces the likelihood of encountering sharp gradients or irregularities that can hinder optimization. While we do not make specific assumptions about the second derivatives (Hessian) of the loss function, the general effect of these techniques is to promote a smoother loss surface, facilitating more reliable convergence of optimization algorithms.

\section{Implementation Details}
\label{sec:implentation_details}

This section describes the implementation details of our framework.

\subsection{Algorithm}

Algorithms \ref{Alg:skill_extraction} and \ref{Alg:skill_relabeling} describe, respectively, the method for extracting skills from datasets and the process of relabeling skill lengths in our framework.

\begin{algorithm}
\caption{Skill Extraction with Contrastive Learning}
\label{Alg:skill_extraction}
\begin{algorithmic}[1]
\Require Unlabeled offline dataset $\mathcal{D} = \{\tau_1, \tau_2, \dots, \tau_N\}$, where each $\tau_i = \{(s_t, a_t)\}_{t=1}^T$, Relabeling interval $T_{\text{relabel}}$
\Ensure Learned skill embeddings $z \in \mathcal{Z}$
\State Initialize skill encoder $q_{\theta_q}$, skill decoder $\pi_{\theta_\pi}$, skill prior $p_{\theta_p}$
\State Initialize skill similarity function $f_{\theta_f}$
\State Initialize state encoder $E_{\theta_E}$, observation decoder $O_{\theta_O}$, skill termination predictor $T_{\theta_T}$
\State Initialize skill length $H_t$ for each $(s_t, a_t)$ in $\mathcal{D}$
\For{each training iteration $i$}
    \State Sample batch of skill trajectories $\{\tau^{skill}_i\}_{i=1}^B$ from $\mathcal{D}$
    \For{each $\tau^{skill}_i$ in batch}
        \State Select key states $\vec{s}_t = \{s_t, s_{t+a}, s_{t+b}, s_{t+H_t-1}\}$ from $\tau^{skill}_i  = \{s_t, \dots, s_{t+H_t-1}\}$
        \State Encode skill: $z \sim q_{\theta_q}(z|\vec{s}_t)$
        \State Decode actions: $\hat{a} = \pi_{\theta_\pi}(a|\vec{s}_t, z)$
        \State Compute prior: $p_{\theta_p}(z|s_t)$
        \State Compute $\mathcal{L}_{\text{embedding}}$ using Eq.\ref{eq:embedding_loss}
        \State Sample negative state $s^-$ from a different skill trajectory $\tau^{skill}_j \neq \tau^{skill}_i$
        \State Compute similarity: $f_{\theta_f}(s_t, z, s_{t+b})$ and $f_{\theta_f}(s_t, z, s^-)$
        \State Compute $\mathcal{L}_{\text{contrastive}}$ using Eq.\ref{eq:contrastive_loss}
        \State Encode states: $h_t = E_{\theta_E}(s_t)$, $h_{t+H_t} = E_{\theta_E}(s_{t+H_t})$
        \State Predict target state: $\hat{h}_{t+H_t} = T_{\theta_T}(h_t, z)$
        \State Decode observation: $\hat{s}_t = O_{\theta_O}(h_t)$
        \State Compute $\mathcal{L}_{\text{target}}$ using Eq.\ref{eq:target}
    \EndFor
    \State Compute total loss: $\mathcal{L}_{\text{total}} = \mathcal{L}_{\text{embedding}} + \mathcal{L}_{\text{contrastive}} + \mathcal{L}_{\text{target}}$
    \State Update model parameters $\theta_q$, $\theta_\pi$, $\theta_p$, $\theta_f$, $\theta_E$, $\theta_O$, and $\theta_T$ to minimize $\mathcal{L}_{\text{total}}$
    \If{$i \mod T_{\text{relabel}} == 0$}
        \State Perform skill length relabeling (Algorithm \ref{Alg:skill_relabeling})
    \EndIf
\EndFor
\end{algorithmic}
\end{algorithm}

\begin{algorithm}
\caption{Skill Length Relabeling}
\label{Alg:skill_relabeling}
\begin{algorithmic}[1]
\Require Dataset $\mathcal{D}$, Similarity function $f$, Threshold $\epsilon$, Min/max skill length $\delta_{\min}, \delta_{\max}$
\State Sample $N_{\text{sample}}$ states from $\mathcal{D}$
\For{each sampled state $s_t$}
    \State Retrieve corresponding episode $\tau_{\text{episode}}$
    \State Encode skill: $z_t \leftarrow q_{\theta_q}(s_t, s_{t+a}, s_{t+b}, s_{t+H_t-1})$
    \For{$\alpha = 1$ to $\text{len}(\tau_{\text{episode}}) - t$}
        \State Compute similarity: $\text{sim} \leftarrow f_{\theta_f}(s_t, z_t, s_{t+\alpha})$
        \If{$\text{sim} \leq \epsilon$}
            \State \textbf{break}
        \EndIf
    \EndFor
    \State Relabel the skill duration: $H_t' \leftarrow \alpha + 1$
    \State Constrain skill length: $H_t' \leftarrow \max(\delta_{\min}, \min(H_t', \delta_{\max}))$
\EndFor
\end{algorithmic}
\end{algorithm}

\subsection{Downstream RL}
\label{section:downstream_rl_details}

Here, we outline the algorithm used to train downstream tasks. Rather than developing a novel approach for efficiently searching through skill combinations, our focus was on refining the skills themselves. To showcase the effectiveness of these learned skills, we used the same method for combining skills as in previous work, drawing on established approaches such as SPiRL (\cite{pertsch2021accelerating}) and SkiMo (\cite{shi2022skill}). A brief overview of each method is provided below, with further details available in the corresponding papers.

\textbf{Model-Free Approach}\quad For the model-free setting, we employ the SPiRL\cite{pertsch2021accelerating} framework, which introduces a skill prior to guide exploration in the skill space. This approach leverages a learned skill prior  $p_{a}(z|s_t)$, which captures which skills are most relevant given the current state, reducing exploration inefficiencies.

The high-level policy  $\pi_\theta(z|s_t)$  selects skill embeddings  $z_t$  that are decoded into action sequences. To encourage the exploration of meaningful skills, we regularize the policy with the learned skill prior using the following objective:

\begin{equation}
    L_{\text{SPiRL}} = \mathbb{E}_{\pi} \left[ \sum_{t=0}^{T/H} \left( r_t - \alpha D_{KL}\left( \pi_\theta(z_t|s_t) \| p_{a}(z|s_t)\right) \right) \right]
\end{equation}

Here,  $r_t$  is the cumulative reward over  $H$ -step action sequences, and the KL divergence ensures that the policy samples skills aligned with the learned prior. This improves exploration efficiency by focusing on promising skills, rather than exploring uniformly.

The Soft Actor-Critic (SAC) algorithm is extended to handle skill embeddings instead of primitive actions. The critic updates the Q-values for the selected skills, incorporating the learned skill prior to regularize the policy:

\begin{equation}
    Q_\phi(s_t, z_t) = r_t + \gamma Q_\phi(s_{t+H}, \pi_\theta(z_{t+H} | s_{t+H}))
\end{equation}

The learned skill prior $ p_\omega(z|s_t) $ plays a crucial role in guiding the policy toward effective skills during both training and execution, significantly accelerating downstream task learning.

\textbf{Model-Based Approach}\quad In the model-based reinforcement learning setting, we leverage the Skill Dynamics Model from SKiMo (\cite{shi2022skill}) to perform efficient long-horizon planning in the skill space. Given an initial state  $s_t$  and a skill embedding  $z_t$, the skill dynamics model  $D_\theta$  predicts the latent state after the skill has been executed for  $H$  steps, as  $h_{t+H} = T_\theta(h_t, z_t)$, where  $h_t = E_\theta(s_t)$  is the latent state produced by the state encoder.

In addition to skill dynamics, the model includes reward prediction and value estimation for each skill. The reward model  $R_\phi(h_t, z_t)$  predicts the cumulative reward  $r_t$  over the skill's duration, while the Q-value model  $Q_\phi(h_t, z_t)$  estimates the expected return from executing the skill. These predictions are used both for planning and for policy optimization, with the Q-value computed as:

\begin{equation}
    Q_\phi(h_t, z_t) = r_t + \gamma Q_\phi(h_{t+H}, \pi_\theta(z_{t+H} | h_{t+H}))
\end{equation}

Temporal Difference Model Predictive Control (TD-MPC) is used for planning. This method leverages skill dynamics and reward models to simulate long-term trajectories, optimizing skill sequences over a defined planning horizon. The Cross-Entropy Method (CEM) is employed to search for the optimal sequence of skills, and the first skill in the sequence is then executed in the environment. The planning step selects skills by maximizing predicted Q-values over the horizon:

\begin{equation}
    \text{Plan}(s_t) = \arg\max_{z_{t:t+K}} \sum_{k=0}^{K} \left[ Q_\theta(h_{t+kH}, z_{t+kH}) \right]    
\end{equation}

where  $K$  represents the planning horizon. This method allows for efficient skill-based planning by minimizing the number of predictions required, reducing cumulative prediction errors.

The model is trained using a combination of losses, including latent state consistency, reward prediction, and value prediction:

\begin{equation}
    L_{\text{model-based}} = \mathbb{E} \left[ \lambda_L \| T_\theta(h_t, z_t) - h_{t+H} \|^2 + \lambda_R \| r_t - R_\phi(h_t, z_t) \|^2 + \lambda_V \| Q_\theta(h_t, z_t) - Q_{\text{target}} \|^2 \right]
\end{equation}

where  $Q_{\text{target}}$  is the target Q-value computed from future predictions. By using TD-MPC with skill dynamics, reward, and value predictions, this approach enables efficient long-horizon planning in the skill space, improving sample efficiency for reinforcement learning in complex tasks.

% 더 자세한 논문들은 해당 논문을 참고하세요.

\subsection{Model Implementation Details}

The key models we need to train include the skill encoder $q_{\theta_q}$, skill decoder $\pi_{\theta_\pi}$, skill prior $p_{\theta_p}$, skill-conditioned state representation $\phi_{\theta_\phi}$, state representation $\psi_{\theta_\psi}$, state encoder $E_{\theta_E}$, observation decoder $O_{\theta_O}$, and skill termination predictor $T_{\theta_T}$. Each of these models is implemented as a neural network. The skill encoder $q_{\theta_q}$, skill decoder $\pi_{\theta_\pi}$, skill prior $p_{\theta_p}$, observation decoder $O_{\theta_O}$, and skill termination predictor $T_{\theta_T}$ are all 4-layer MLPs with a hidden dimension of 256, using the ELU activation function. The skill-conditioned state representation $\phi_{\theta_\phi}$ and the state representation $\psi_{\theta_\psi}$ are 2-layer MLPs with a hidden dimension of 128 and employ the ReLU activation function. $\phi_{\theta_\phi}$ and $\psi_{\theta_\psi}$ output the input values to a representation dimension of 16. Additionally, the state encoder $E_{\theta_E}$ is an LSTM with a hidden dimension of 128. The learning rate for all models was set to 0.0003.

% 우리가 학습을 해야하는 주요 모델은 다음과 같다: skill encoder $q_\phi$, skill decoder $\pi_\theta$, skill-conditioned state representation $\phi$, state representation $\psi$, state encoder $E_\theta$, observation decoder $O_\theta$, skill termination predictor $T_\theta$. 이들은 모두 neural network이다. skill encoder $q_\phi$, skill decoder $\pi_\theta$, observation decoder $O_\theta$, skill termination predictor $T_\theta$ 는 모두 4-layer MLP이며, hidden dimension은 256, activation function은 elu를 사용한다.  skill-conditioned state representation $\phi$, state representation \psi 는 2-layer MPL이며 hidden dimension은 128, activation function은 ReLU를 사용한다. 마지막으로 state encoder $E_\theta$는 LSTM이며, hidden dimension은 128이다. 모든 모델의 learning rate는 0.0003으로 설정하였다.

\section{Experimental Details}
\label{sec:experimental_details}

\subsection{Environments}

\begin{figure}[t]
\centering
\includegraphics[width=0.95\textwidth]{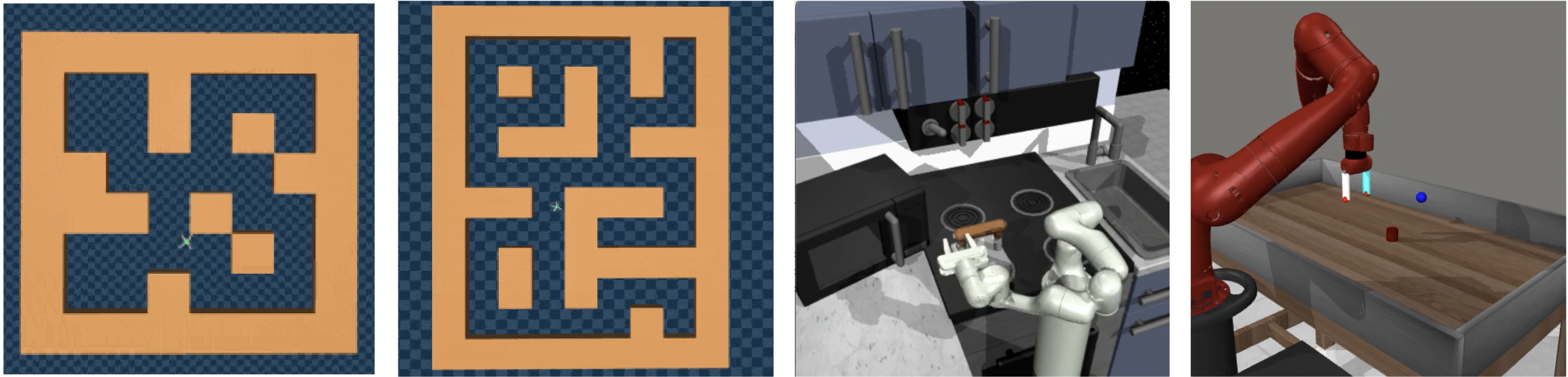}
\caption{Environments from left-to-right: Antmaze, a long-horizon navigation task with two variants (Antmaze-Medium and Antmaze-Large); Kitchen, where the agent is tasked to complete four independent subtasks; and Meta-world Pick-and-Place, where the goal is to pick up an object and place it at a target location.}
\label{fig:4}
\end{figure}

In this section, we provide a more detailed description of the experimental environments.

The goal of the antmaze task is to guide the ant agent to the goal position within the maze. The observation space has 29 dimensions, capturing the agent's position and joint information. The action space consists of 8 dimensions, corresponding to the control of the agent's 8 joints. In this environment, we set the skill dimension to 5 for all baselines. As mentioned earlier, our framework needs to determine whether the pblueicted skill termination state has been reached. This is assessed using the ant robot's x and y position, the z-coordinate of the torso, its orientation (quaternion x, y, z, w), and the hip joint angles of the 4 legs (1 per leg), resulting in a total of 14 dimensions. The achievement of the skill target state is determined by the L2 distance between the pblueicted state and the current state. The success threshold is set at a distance of 0.5.

In the Kitchen environment, the objective is to manipulate various objects to achieve desiblue states. We used the 'mixed-v0' dataset, where subtasks are presented in a disjoint manner, requiring the agent to generalize across non-sequential task executions. Similar to antmaze, we set the skill dimension to 5 for all baselines. The observation space is 30-dimensional, representing the robot's joint states, while the action space contains 9 dimensions for controlling its joints. Skill termination is determined based on the robot's 9 joint positions and velocities (18 dimensions), with a success threshold distance of 0.1.

For Meta-World's pick-and-place task, the goal is to place an object at a target location. We used datasets that include both successful and failed task executions, as provided by \cite{yoo2022skills}. The datasets consist of three types: Medium-Replay (MR) with 150 trajectories from intermediate policy stages, Replay (RP) with 100 trajectories spanning the entire learning process (including relevant and irrelevant behaviors), and Medium-Expert (ME) with 50 expert-level trajectories. The observation space is 18-dimensional, representing the robot's joint positions, velocities, and the object's position, while the action space has 4 dimensions. In this environment, the skill dimension is set to 2 for all baselines. The skill termination is evaluated based on the robot's 3D end-effector position and the open/closed state of the gripper (4 dimensions), with a success threshold distance of 0.02.

\subsection{Hyperparameters}

\begin{table}[h]
\centering
\caption{DCSL hyperparameters.}
\begin{tabular}{llll}
\hline
\textbf{Hyperparameter} & \textbf{AntMaze} & \textbf{Kitchen} & \textbf{PickPlace} \\ \hline
Batch Size (skill extraction) &  256  & 256 & 128   \\ 
Batch Size (Downstream learning) &  256  & 256 & 128  \\ 
Max Global Step (skill extraction) &    & 100,000 &    \\ 
Max Global Step (Downstream learning) &   & 4,000 &  \\
{Weighting coefficient $\lambda_{\text{BC}}$} &  & {2} &  \\
{Weighting coefficient $\lambda_{\text{SP}}$} &  & {1} &  \\
{Weighting coefficient $\lambda_{\text{CL}}$} &  & {1} &  \\
{Weighting coefficient $\lambda_{\text{RE}}$} &  & {1} &  \\
{Weighting coefficient $\lambda_{\text{ST}}$} &  & {2} &  \\
Skill Dimension & 5 & 5 & 2 \\
Initial Skill Length $H$ &   & 10 &   \\
Relabeling Interval $T$ &   & 20,000 &   \\
Relabeling Threshold $\epsilon$ &  & 0 &  \\
Length Constraint $\delta_{min}$ &  & 4 &  \\
Length Constraint $\delta_{max}$ &  & 30 &  \\
Distance Threshold & 0.5 & 0.1 & 0.02 \\ \hline

\end{tabular}
\end{table}

\section{Additional Experiments}

\subsection{Results in Additional Environments}

\begin{wrapfigure}{R}{0.4\textwidth}
  \centering
  \includegraphics[width=0.4\textwidth]{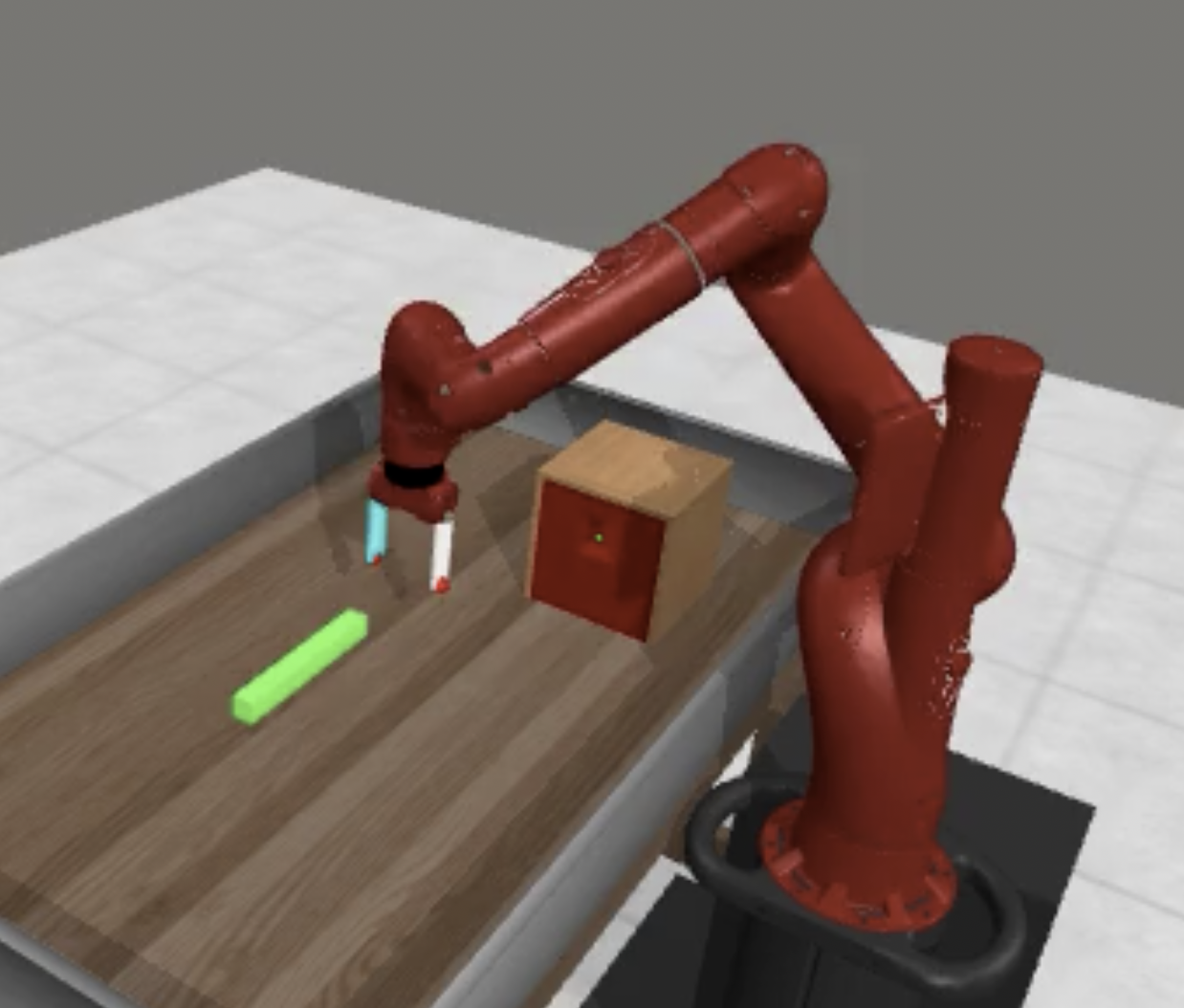}
  \caption{Peg-Insert-Side task: The robot must grasp a long object and insert it into a hole in the box.}
  \label{fig:peg_insert_env}
\end{wrapfigure}

We conducted additional experiments in an environment that, like Pick-and-Place, allows us to compare results based on data quality. Peg-Insert-Side (\cite{yu2020meta}) uses the same robot as Pick-and-Place, but the task involves inserting a peg into a hole in a box. While both tasks involve grasping and moving objects, there are two significant differences between these environments.

First, there's a difference in the size of the object the robot needs to grasp. In Pick-and-Place, the object is very small, so even a slight misalignment of the robot's gripper can result in a failed grasp. In contrast, the peg in Peg-Insert-Side is longer, making it easier for the robot to grasp.

The second difference is that in Peg-Insert-Side, the robot can use the box as a guide when positioning the object. By moving the peg along the box's surface, the robot can more easily find the hole. Pick-and-Place doesn't offer such guidance, making it a more challenging environment for the robot to complete its task.

These differences are reflected in the experimental results. Regardless of the dataset quality and the method used for finding downstream tasks, all methods showed high success rates in Peg-Insert-Side. This confirms that one of our framework's major strengths - skill length adjustment - becomes more pronounced as the task difficulty increases.

\begin{figure}[t]
\centering
\includegraphics[width=1\textwidth]{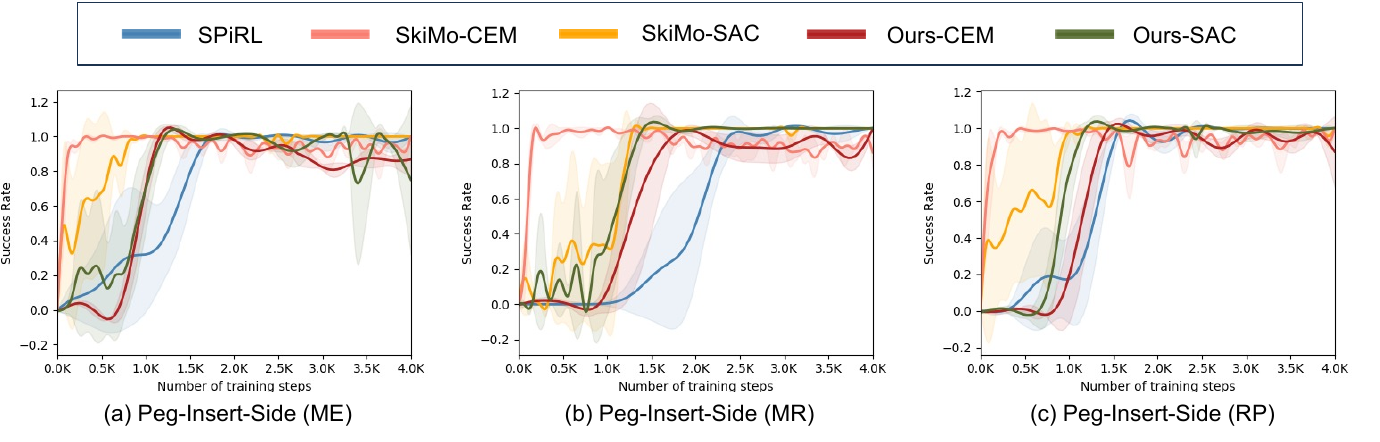}
      \caption{Performance comparison of various methods in the Peg-Insert-Side task across different dataset qualities (ME, MR, RP). All methods demonstrate high success rates, highlighting the task's relative ease compared to Pick-and-Place.}
      \label{fig:exp_peg_insert}
\end{figure}

\subsection{Visualization of Skill Selection}

\begin{figure}[t]
\centering
    \begin{subfigure}{.47\textwidth}
        \centering
        \includegraphics[width=1\textwidth]{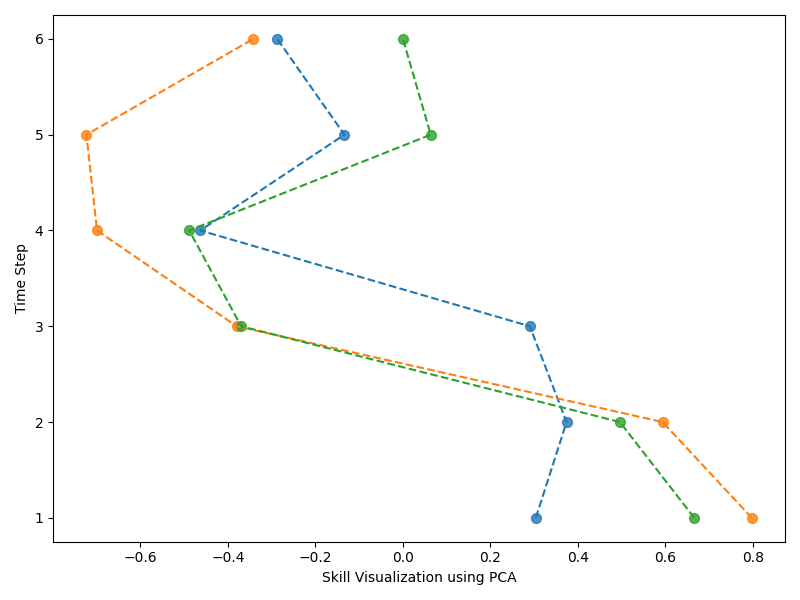}
        \caption{SPiRL}
        \label{fig:spirl_skill_path}

    \end{subfigure}
    \begin{subfigure}{.47\textwidth}
        \centering
        \includegraphics[width=1\textwidth]{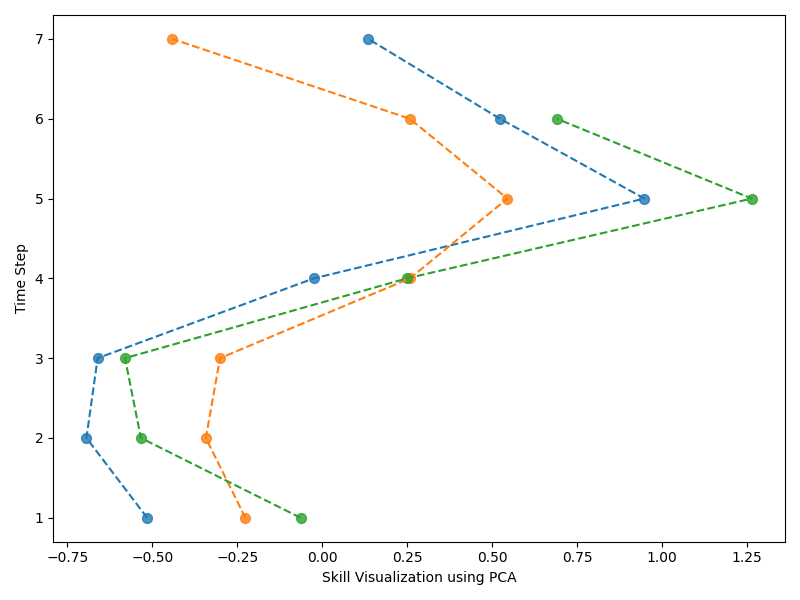}
        \caption{DCSL (Ours)}
        \label{fig:dcsl_skill_path}
    \end{subfigure}
    \caption{Visualization of skill usage in the pick-and-place task. The x-axis represents the skills reduced through PCA analysis, while the y-axis indicates the timesteps at which the high-level policy operates.}
    \label{fig:skill_path_viz}
\vskip -0.54cm
\end{figure}

We visualized how skills are utilized in the pick-and-place task to verify whether DCSL effectively clusters similar behaviors into similar skills. Using PCA analysis, we reduced the dimensionality of the skills selected by the high-level policy to one dimension. In each episode, the initial and target positions of the object varied. For a more precise analysis, we configured DCSL's high-level policy to sustain a selected skill for the duration of 10 low-level actions, matching the behavior of SPiRL. The results are presented in Fig. \ref{fig:skill_path_viz}. These figures illustrate that, despite variations in the object's initial and target positions, the patterns of skill usage in DCSL remain consistent across different scenarios. This consistency suggests that our method captures some level of semantic context in skill utilization, even though further analysis is required for explicit interpretability. In contrast, SPiRL also exhibits instances of selecting similar skills in certain segments. For example, at timestep 2, the skills selected are closely clustered, likely corresponding to the robot arm's gripper performing the action of grasping the object. As previously mentioned, SPiRL embeds action sequences as skills, which explains why similar skills are selected in segments where the gripper's movements are identical. However, outside of these segments, SPiRL demonstrates less consistent patterns of skill usage compared to DCSL.

\subsection{Skill Length Analysis}

\begin{figure}[t]
\centering
\includegraphics[width=0.95\textwidth]{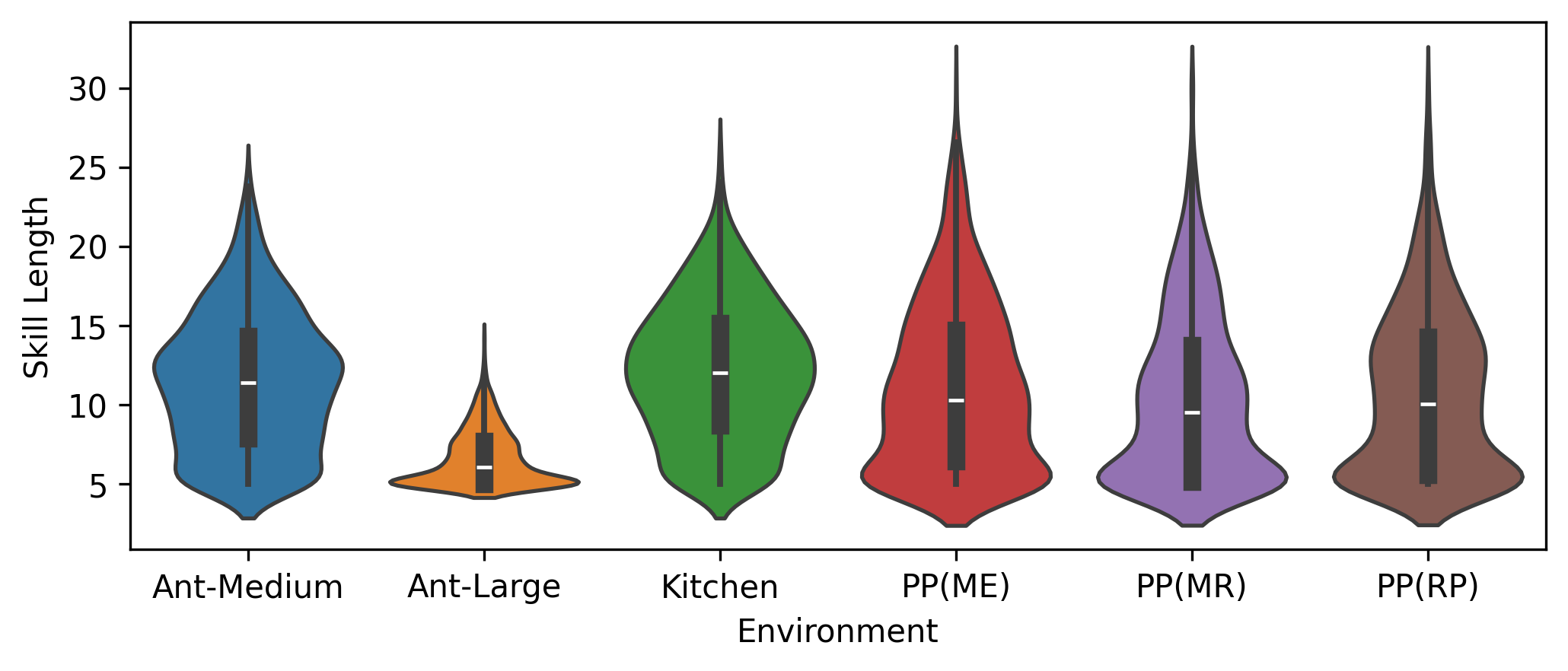}
      \caption{Distribution of relabeled skill lengths across various environments using our framework. These results emphasize the adaptability of DCSL in tailoring skill lengths to the specific demands of each task, improving flexibility and generalization in long-horizon RL tasks.}
      \label{fig:exp_skill_length}
\end{figure}

We analyzed the distribution of relabeled skill lengths produced by the DCSL framework across different environments using violin plots (Fig. \ref{fig:exp_skill_length}). These plots reveal that skill durations were dynamically adjusted according to state transitions and task complexity, rather than relying on the initial lengths. In the antmaze-medium environment, skill lengths primarily ranged from 10 to 15 steps, reflecting medium-length behaviors suitable for the task. In contrast, skills in the antmaze-large environment were generally shorter, supporting the need for more frequent and precise decision-making. For more complex environments, such as the kitchen and pick-and-place tasks (ME, MR, RP datasets), the skill length distribution was more diverse, indicating that DCSL effectively captured both short and long action sequences necessary to handle varied behaviors. This adaptability highlights the framework’s ability to tailor skill durations to the specific demands of each task, enhancing its flexibility and generalization in long-horizon reinforcement learning scenarios.

\subsection{Additional Ablation Studies}

\subsubsection{number of key states}

\begin{figure}[t]
\centering
\includegraphics[width=0.6\textwidth]{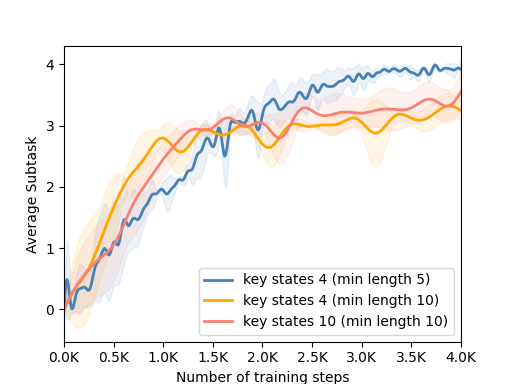}
  \caption{The ablation study on the number of key states for the Kitchen task.}
      \label{fig:kitchen_keystates}
\end{figure}

To address the impact of the number of key states on DCSL's performance, we conducted an ablation study in the Kitchen environment. This study explores the relationship between the number of key states and the minimum skill length, which are interconnected parameters in our framework. We compared three configurations: (1) 4 key states with a minimum skill length of 5, (2) 4 key states with a minimum skill length of 10, and (3) 10 key states with a minimum skill length of 10. The results, presented in Fig. \ref{fig:kitchen_keystates}, show that using 4 key states with a minimum skill length of 5 achieved the highest average return. The superior performance of the 4 key states (min length 5) configuration suggests that allowing for shorter skills can capture more fine-grained behaviors, which is particularly beneficial in manipulation tasks like those in the Kitchen environment. Interestingly, increasing the number of key states from 4 to 10 while maintaining a minimum skill length of 10 showed only a slight improvement in performance. The minimal difference in performance between the configurations with 4 key states (min length 10) and 10 key states (min length 10) can be attributed to the effectiveness of the similarity function in DCSL. This similarity function, learned through contrastive learning, allows the model to capture the semantic context of behaviors regardless of the number of key states used. By focusing on state transitions, DCSL can effectively cluster similar behaviors into coherent skills, even with fewer key states.

\subsubsection{Ablation on similarity threshold $\epsilon$}

\begin{figure}[t]
\centering
\includegraphics[width=0.6\textwidth]{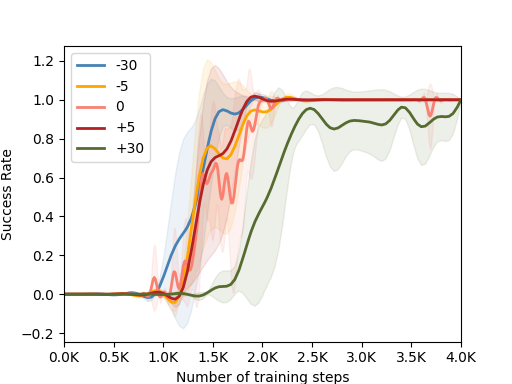}
  \caption{The ablation study on the similarity threshold $\epsilon$ for the pick-and-place task.}
      \label{fig:PP_epsilon}
\end{figure}

To evaluate the sensitivity of our DCSL framework to the similarity threshold $\epsilon$, we conducted a comprehensive analysis in the pick-and-place environment using the Replay (RP) dataset (Fig. \ref{fig:PP_epsilon}. We tested five different $\epsilon$ values: -30, -5, 0, 5, and 30. Our results demonstrated that the framework achieved high success rates across all threshold settings, indicating robustness to this parameter. However, we observed that setting $\epsilon$ to 30 led to a slight increase in the number of training steps required for convergence compared to other values. This can be attributed to the stricter criterion imposed by a higher $\epsilon$ value for considering state transitions as part of the same skill during the length relabeling process. Consequently, this stricter threshold tends to generate shorter skills, which in turn necessitates more training steps for the high-level policy to learn effectively. This analysis provides valuable insights into the trade-off between skill granularity and learning efficiency in our framework, demonstrating its adaptability across different threshold values while highlighting the impact of skill length on downstream task learning.

\subsubsection{Comparison of Task Learning Efficiency with and without Relabeling}

\begin{table}[t]
\centering
\caption{Comparison of the number of timesteps required to complete pick-and-place.}
\label{table:relabeling_comparison}
\begin{tabularx}{\linewidth}{l>{\centering\arraybackslash}X>{\centering\arraybackslash}X}
\hline
Task & w/o Relabeling & with Relabeling \\
\hline
PP(ME) & $\textbf{75.1} \pm 16.9$ & $80.1 \pm 13.7$ \\
PP(MR) & $77.8 \pm 8.1$  & $\textbf{56.1} \pm 5.1$  \\
PP(RP) & $98.9 \pm 10.8$ & $\textbf{64.4} \pm 16.3$ \\
\hline
\end{tabularx}
\end{table}

We conducted a comparison of the timesteps required for task completion across different datasets with skill length adjustment (Table \ref{table:relabeling_comparison}). Interestingly, the results revealed that skills extracted from suboptimal datasets enabled tasks to be completed more efficiently compared to those extracted from the high-quality PP(ME) dataset. This finding suggests that, despite containing more irrelevant actions, suboptimal datasets allow DCSL to capture a greater diversity of behaviors as skills, ultimately contributing to more efficient task completion.

\section{Limitations}

While DCSL demonstrates promising results across various environments, it also has several limitations that warrant further investigation. First, DCSL shows limited performance with CEM-based planning. Our experiments reveal that the variable skill lengths in DCSL can destabilize the planning horizon in Cross-Entropy Method (CEM)-based approaches. This inconsistency in time scales reduces CEM's effectiveness, particularly in long-horizon tasks, leading to suboptimal performance compared to SAC-based methods. Second, DCSL's generalization to novel tasks remains limited. Although it performs well within the distribution of the training dataset, its ability to transfer learned skills to entirely new tasks or significantly different environments is constrained. This highlights the need for further research into more robust mechanisms for skill transfer. Additionally, combining our approach with intrinsic rewards that aid exploration could be a viable method for learning new skills through online interaction.
Third, DCSL heavily depends on the diversity of the training dataset. Its skill learning is restricted to the behaviors present in the offline dataset, making it incapable of learning skills that are not represented in the training data. This limitation suggests potential future work in incorporating online interaction through exploration rewards to enable the learning of new skills beyond the initial dataset. Fourth, the computational complexity of DCSL is another challenge. The processes of dynamic skill length adjustment and contrastive learning introduce additional computational overhead compared to fixed-length skill learning methods, which could limit its applicability in resource-constrained environments or real-time applications. Finally, there is an issue with the interpretability of the learned skills. While DCSL effectively clusters similar behaviors, the semantic meaning of these clustered skills is not always clear or interpretable. This lack of interpretability could pose challenges in applications where explainability is crucial. Addressing these limitations offers exciting opportunities for future research. Potential directions include developing more robust planning methods compatible with variable-length skills, enhancing skill transfer capabilities, and exploring hybrid offline-online learning approaches to broaden the range of learnable skills.

\end{document}